\newcommand{\ignore}[1]{}
\newtheorem{definition}{Definition}
\newtheorem{lemma}{Lemma}
\newtheorem{theorem}{Theorem}
\newtheorem{corollary}{Corollary}
\newcommand{\ea}{(1+1)~EA\xspace}
\newcommand{\gsemo}{GSEMO\xspace}
\title{Runtime Analysis of Single- and Multi-Objective Evolutionary Algorithms for Chance Constrained Optimization Problems with Normally Distributed Random Variables}
\author{Frank Neumann\\
Optimisation and Logistics\\
School of Computer Science\\
The University of Adelaide\\
Adelaide, Australia
\And
Carsten Witt\\
Algorithms, Logic and Graphs\\
DTU Compute\\ Technical University of Denmark\\
2800 Kgs. Lyngby Denmark
}
\begin{document}

\maketitle
\begin{abstract}
  Chance constrained optimization problems allow to model problems where constraints involving stochastic components should only be violated with a small probability. Evolutionary algorithms have been applied to this scenario and shown to achieve high quality results. With this paper, we contribute to the theoretical understanding of evolutionary algorithms for chance constrained optimization. We study the scenario of stochastic components that are independent and normally distributed.
  Considering the simple single-objective (1+1)~EA, we show that imposing an additional uniform constraint already leads to local optima for very restricted scenarios and an exponential optimization time. We therefore introduce a multi-objective formulation of the problem which trades off the expected cost and its variance. We show that multi-objective evolutionary algorithms are highly effective when using this formulation and obtain a set of solutions that contains an optimal solution for any possible confidence level imposed on the constraint. Furthermore, we prove that this approach can also be used to compute a set of optimal solutions for the chance constrained minimum spanning tree problem. In order to deal with potentially exponentially many trade-offs in the multi-objective formulation, we propose and analyze improved convex multi-objective approaches.  Experimental investigations on instances of the NP-hard stochastic minimum weight dominating set problem  confirm the benefit of the multi-objective and the improved convex multi-objective approach in practice. 
\end{abstract}

\section{Introduction}
Many real-world optimization problems involve solving optimization problems that contain stochastic components~\citep{BEN:09}.
Chance constraints~\citep{Charnes} allow to limit the probability of violating a constraint involving stochastic components. In contrast to limiting themselves to ruling out constraint violations completely, this allows to deal with crucial constraints in a way that allows to ensure meeting the constraints with high confidence (usually determined by a confidence level $\alpha$)
while still maintaining solutions of high quality with respect to the given objective function.

Evolutionary algorithms have successfully been applied to chance constrained optimization problems~\citep{poojari,Zhang}. Recent studies investigated the classical knapsack problem in static \citep{DBLP:conf/gecco/XieHAN019,DBLP:conf/gecco/XieN020,DBLP:journals/corr/abs-2204-05597} and dynamic settings~\citep{DBLP:conf/ecai/AssimiHXN020} as well as complex stockpile blending problems~\citep{DBLP:conf/gecco/XieN021} and the optimization of submodular functions~\citep{DBLP:conf/ppsn/NeumannN20}.
Theoretical analyses for submodular problems with chance constraints, where each stochastic component is uniformly distributed and has the same amount of uncertainty, have shown that greedy algorithms and evolutionary Pareto optimization approaches only lose a small amount in terms of approximation quality when comparing against the corresponding deterministic problems~\citep{DBLP:conf/aaai/DoerrD0NS20,DBLP:conf/ppsn/NeumannN20} and that evolutionary algorithms significantly outperform the greedy approaches in practice.
Other recent theoretical runtime analyses of evolutionary algorithms have produced initial results for restricted classes of instances of the knapsack problem where the weights are chosen randomly~\citep{DBLP:conf/foga/0001S19,DBLP:conf/gecco/XieN0S21}.

For our theoretical investigations, we use runtime analysis which is a major theoretical tool for analyzing evolutionary algorithms in discrete search spaces~\citep{NeumannW10,Jansen13,DoerrN20}.
In order to understand the working behaviour of evolutionary algorithms on broader classes of problems with chance constraints, we consider the optimization of linear functions with respect to chance constraints where the stochastic components are independent and each weight $w_i$ is chosen according  to a normal distribution $N(\mu_i, \sigma_i^2)$. This allows to reformulate the problem by a deterministic equivalent non-linear formulation involving a linear combination of the expected value and the standard deviation of a given solution. 

We investigate how evolutionary algorithms can deal with chance constrained problems where the stochastic elements follow a normal distribution.
We first analyze the classical (1+1)~EA on the single-objective formulation given by the linear combination of the expected expected value and the standard deviation. We show that imposing a simple cardinality constraint for a simplified class of instances leads to local optima and exponential lower bounds for the (1+1)~EA.
In order to deal with the issue of the \ea not being able to handle even simple constraints due to the non-linearity of the objective functions, we introduce a Pareto optimization approach for the chance constrained optimization problems under investigation.
So far, Pareto optimization approaches that achieved provably good solution provided a trade-off with respect to the original objective functions and given constraints. In contrast to this, our approach trades off the different components determining the uncertainty of solutions, namely the expected value and variance of a solution.
A crucial property of our reformulation is that the extreme points of the Pareto front provide optimal solutions for any linear combination of the expected value and the standard deviation and solves the original chance constrained problem for any confidence level~$\alpha \geq 1/2$. These insights mean that the users of the evolutionary multi-objective algorithm does not need to know the desired confidence level in advance, but can pick from a set of trade-offs with respect to the expected value and variance for all possible confidence levels. We show that this approach can also be applied to the chance constrained minimum spanning tree problem examined in~\cite{DBLP:journals/dam/IshiiSNN81} where each edge cost is chosen independently according to its own normal distribution. In terms of algorithms, we analyze the well-known \gsemo~\citep{1299908} which has been frequently applied in the context of Pareto optimization~\citep{DBLP:journals/nc/NeumannW06,DBLP:journals/algorithmica/KratschN13,DBLP:journals/ec/FriedrichN15,DBLP:books/sp/ZhouYQ19} and show that it computes such an optimal set of solutions for any confidence level of $\alpha \geq 1/2$ in expected polynomial time if the population size stays polynomial with respect to the given inputs. 

In order to deal with potentially exponentially many trade-offs in the objective space, which may result in an exponential population size for \gsemo, we introduce convex hull based evolutionary multi-objective algorithms in Section~\ref{sec:convex-MOEA}. We show that they compute for every possible confidence level of $\alpha \geq 1/2$ an optimal solution in expected polynomial time even if the number of trade-offs with respect to the two objectives becomes exponential. The developed approaches include a convex version of \gsemo called Convex \gsemo which allows to limit the population size to $n^2$ (where $n$ is the problem size) while still obtaining optimal solutions for the chance constrained problem with uniform constraints as well as the chance constrained minimum spanning tree problem. As a result, Convex \gsemo obtains a 2-approximation also for the classical multi-objective minimum spanning tree problem which improves upon the result given in \cite{DBLP:journals/eor/Neumann07} where a pseudo-polynomial runtime has been proved for \gsemo to achieve a $2$-approximation.

Finally, we carry out extensive experimental investigations for different graphs and stochastic settings of the NP-hard minimum weight dominating set problem in Section~\ref{sec:experiments}.
We study cases where expected cost values and variances are chosen independently and uniformly at random, degree-based expected costs as well as settings where expected cost values and variances are negatively correlated. 
We experimentally compare the (1+1)~EA, \gsemo and Convex \gsemo for these settings. Our results show that the multi-objective approaches outperform the \ea in most cases. Furthermore, we point out that Convex \gsemo clearly outperforms \gsemo for the negatively correlated setting which can be attributed to the much smaller population size of the algorithm.

This article extends its conference version~\citep{DBLP:conf/ijcai/0001W22} in Section~\ref{sec:convex-MOEA} by introducing and analyzing convex evolutionary multi-objective algorithms that allow to work with a polynomial population size while still achieving the same theoretical performance guarantees as the standard \gsemo algorithm. Furthermore, this article includes more comprehensive experimental investigations of the considered algorithms than the conference version in Section~\ref{sec:experiments}.

The article is structured as follows. In Section~\ref{sec2}, we introduce the chanced constrained problems that we investigate in this article.  We provide exponential lower bounds on the optimization time of \ea in Section~\ref{sec3}. Our multi-objective approach is introduced and analyzed in Section~\ref{sec:MOEA} and we provide a convex evolutionary multi-objective algorithm that can deal with potentially exponentially many trade-offs in expected polynomial time in Section~\ref{sec:convex-MOEA}. Finally, we report on our experimental results in Section~\ref{sec:experiments} and finish with some concluding remarks.

\section{Chance Constrained Optimization Problems}
\label{sec2}

Our basic chance-constrained setting is given as follows. Given a set of $n$ items $E=\{e_1, \ldots, e_n\}$ with weights $w_i$, $1 \leq i \leq n$, we want to solve 
\begin{equation}
\min W  \text{~~~~subject to~~~~}  \mathrm{Pr}( w(x) \leq W) \geq \alpha,
\label{chance-problem}
\end{equation}
where $w(x) = \sum_{i=1}^n w_i x_i$, $x \in \{0,1\}^n$, and $\alpha \in \mathopen{[}0,1\mathclose{]}$.
Throughout this paper, we assume that the weights are independent and each $w_i$ is distributed according to a normal distribution $N(\mu_i, \sigma_i^2)$, $1 \leq i \leq n$, where $\mu_i \geq 1$ and $\sigma_i\geq 1$, $1 \leq i \leq n$. We denote by $\mu_{\max} = \max_{1 \leq i \leq n}  \mu_i$ and $v_{\max} = \max_{1 \leq i \leq n} \sigma_i^2$ the maximal expected value and maximal variance, respectively.
According to \cite{DBLP:journals/dam/IshiiSNN81}, the problem given in Equation~\ref{chance-problem} is in this case equivalent to minimizing
\begin{equation}
    g(x) = \sum_{i=1}^n \mu_i x_i + K_{\alpha} \cdot \left(\sum_{i=1}^n \sigma_i^2 x_i \right)^{1/2},
    \label{eq:sumofidentityandsquare}
\end{equation}
where $K_{\alpha}$ is the $\alpha$-fractile point of the standard normal distribution. Throughout this paper, we assume $\alpha \in \mathopen{[}1/2,1\mathclose{[}$ as we are interested in solutions of high confidence. Note that there is no finite value of $K_{\alpha}$ for $\alpha=1$ due to the infinite tail of the normal distribution. Our range of $\alpha$ implies $K_{\alpha}\geq 0$.

We carry out our investigations where there are additional constraints. 
First, we consider the additional constraint $|x|_1 \geq k$, which requires that at least $k$ items are chosen in each feasible solution. 
Furthermore, we consider the formulation of the stochastic minimum spanning tree problem given in \cite{DBLP:journals/dam/IshiiSNN81}. Given an undirected connected weighted graph $G=(V,E)$, $n=|V|$ and $m=|E|$ with random weights $w(e_i)$, $e_i \in E$. The search space is $\{0,1\}^m$. For a search point $x \in \{0,1\}^m$, we have
$w(x) = \sum_{i=1}^m w(e_i) x_i$ as the weight of a solution $x$.  
We investigate the problem given in Equation~\ref{chance-problem} and require for a solution $x$ to be feasible that $x$ encodes a connected graph. We do not require a solution to be a spanning tree in order to be feasible as removing an edge from a cycle in a connected graph automatically improves the solution quality and is being taken care of by the multi-objective algorithms we analyze in this paper. 
Note that the only difference compared to the previous setting involving the uniform constraint is the requirement that a feasible solution has to be a connected graph.

\section{Analysis of \ea}
\label{sec3}
The \ea (Algorithm~\ref{alg:EA}) is a simple evolutionary algorithm using independent bit flips and elitist selection. It is very well studied in the theory of evolutionary computation \citep{DoerrProbabilisticTools} and serves as a stepping stone towards the analysis of more complicated evolutionary algorithms. As common  in the area of runtime analysis, we measure the runtime of the \ea by the number of iterations of the repeat loop. Note that each iteration requires to carry out out one single fitness evaluation. The optimization time refers to the number of fitness evaluations until an optimal solution has been obtained for the first time, and the expected optimization time refers to the expectation of this value.

 \begin{algorithm}[t]
  Choose $x \in \{0,1\}^n$ uniformly at random\;
 \Repeat{$\mathit{stop}$}{
 Create $y$ by flipping each bit $x_{i}$ of $x$ with probability $\frac{1}{n}$\;
 \If{$f(y) \leq f(x)$} {
   $x \leftarrow y;$}
     }
 \caption{\ea for minimization} \label{alg:EA}
 \end{algorithm}

\subsection{Lower Bound for \ea and Uniform Constraint}
\label{subsec:lb}

We consider the (1+1)~EA for the problem stated in Equation~\ref{chance-problem} with an additional uniform constraint that requires that each feasible solution contains at least $k$ elements, i.e.\ $|x|_1 \geq k$ holds. We show that the (1+1)~EA has an exponential optimization time on an even very restrictive class of instances involving only two different weight distributions. 

We use the following fitness function, which should be minimized in the (1+1)~EA:
\begin{eqnarray*}
f(x) = \begin{cases}
g(x) &  |x|_1 \geq k\\
(k-|x|_1)\cdot L & |x|_1<k,
\end{cases}
\end{eqnarray*}
where $L =(1+\sum_{i=1}^n \mu_i+ K_{\alpha} (\sum_{i=1}^n \sigma_i^2)^{1/2})$.
This gives a large penalty to each unit of constraint violation.
It implies that any feasible solution is better than any infeasible solution and that $f(x) > f(y)$ holds if both $x$ and $y$ are infeasible and $|x|_1 > |y|_1$. Furthermore, the fitness value of an infeasible solution only depends on the number of its elements.

We now show a lower bound on the optimization time of the (1+1)~EA for a specific instance class $I$ containing only two types of elements.
Type $a$ elements have weights chosen according to $N(n^2+\delta, 1)$ and type $b$ elements have weights chosen according to $N(n^2, 2)$. We set $\delta= \frac{1}{2\sqrt{k\cdot 1.48}}$.
The instance $I$ has exactly $n/2$ elements of type $a$ and $n/2$ elements of type $b$.
We consider $K_{\alpha}=1$ which matches $\alpha \approx 0.84134$, and set $k=0.51n$. Using the fitness function~$f$, we have the additional property for two feasible solutions $x$ and $y$ that $f(x) < f(y)$ if $|x|_1 < |y|_1$ due to an expected weight of at least $n^2$ for any additional element in a feasible solution. This also implies that an optimal solution has to consist of exactly $k$ elements. The quality of a solution with $k$ elements only depends on the number of type $a$ and type $b$ elements it contains. An optimal solution includes $n/2$ elements of type $a$ whereas a locally optimal solution includes $n/2$ elements of type $b$. Note that an optimal solution has minimal variance among all feasible solutions. The \ea produces with high probability the locally optimal solution before the global optimum, which leads to the following result.

\begin{theorem}
\label{thm:LBoneone}
The optimization time of the (1+1)~EA on the instance $I$ using the fitness function $f$ is $e^{\Omega(n)}$ with probability $1 - e^{-\Omega(n^{1/4})}$.
\end{theorem}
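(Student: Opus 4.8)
The plan is to exploit the two-basin structure of the fitness landscape induced by the concavity of the standard-deviation term and to show that the \ea almost surely descends into the wrong (locally optimal) basin before it can ever reach the global optimum, after which escaping takes exponential time.

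First I would reduce the analysis to a one-dimensional picture. Once a feasible search point has been reached, every accepted point stays feasible (infeasible points carry the huge penalty $L$), and because each element contributes expected weight at least $n^2$ while the whole standard-deviation term is only $O(\sqrt n)$, every accepted feasible point eventually has exactly $k$ elements. A solution with $k$ elements and $j$ elements of type~$a$ has fitness $g=kn^2+h(j)$ with $h(j)=j\delta+\sqrt{2k-j}$, so the fitness depends only on the single integer $j\in[\,k-n/2,\;n/2\,]=[0.01n,\,0.5n]$. A short calculation gives $h''(j)=-\tfrac14(2k-j)^{-3/2}<0$, so $h$ is strictly concave, with its unique interior maximum at $j^\ast$ where $2k-j^\ast=1/(4\delta^2)=1.48k$, i.e.\ $j^\ast=0.52k=0.2652n$. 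Hence the minimum over the feasible range is attained at an endpoint, and plugging in the chosen constants shows $h(0.5n)<h(0.01n)$: thus $j=n/2$ (all type~$a$) is the global optimum and $j=0.01n$ (all type~$b$) is a strict local optimum whose basin is $[0.01n,\,j^\ast)$.

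Next I would analyse the initial phases. A uniformly random start has $|x|_1=n/2\pm O(\sqrt n)$ with about $n/4$ elements of each type, so by Chernoff bounds it is infeasible and satisfies $j<j^\ast$ with probability $1-e^{-\Omega(n)}$. While $|x|_1<k$ the fitness depends only on $|x|_1$, so the two types are completely symmetric: the number of ones is non-decreasing and the composition performs an unbiased random walk with a restoring drift towards balance (a surplus of type-$a$ ones leaves more type-$a$ zeros, making the next insertion more likely to be of type~$b$). The crucial claim is that when $|x|_1$ first reaches $k$, the number of type-$a$ elements satisfies $j\le k/2+o(n)<j^\ast$; since the typical value is $k/2=0.255n$ while $j^\ast=0.2652n$, there is an $\Omega(n)$ safety margin. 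From such a point the \ea can only accept moves that do not increase $h$, so it monotonically decreases $j$ and settles at the local optimum $j=0.01n$ without ever crossing the peak at $j^\ast$.

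Finally I would bound the escape time and identify the bottleneck. By concavity the set of values with $h(j)\le h(0.01n)$ is $\{0.01n\}\cup[j_2,0.5n]$ with $j_2\approx0.48n$, so from the local optimum the only accepted non-trivial move is a single mutation raising $j$ from $0.01n$ to at least $j_2$; keeping exactly $k$ elements forces it to simultaneously insert $\ge j_2-0.01n=\Omega(n)$ type-$a$ zeros and delete the same number of type-$b$ ones, hence to flip $\Omega(n)$ specific bits at once. This has probability $e^{-\Omega(n\log n)}$ per step, so a union bound over $e^{\Omega(n)}$ steps shows no escape occurs within $e^{\Omega(n)}$ iterations except with probability $o(1)$. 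The main obstacle is precisely the composition bound of the middle phase: one must handle the interplay between the neutral composition-swapping $2$-bit moves and the count-increasing $1$-bit moves, establish the restoring drift, and convert it into a concentration statement (via a martingale/Azuma or drift argument) strong enough to guarantee $j<j^\ast$ throughout the infeasible phase. This step dominates the failure probability, and the comparatively weak $e^{-\Omega(n^{1/4})}$ tail presumably reflects the conservative deviation threshold one can afford while still remaining within the $\Omega(n)$ margin; the other two phases fail only with probability $e^{-\Omega(n)}$ or smaller, so the overall bound is $1-e^{-\Omega(n^{1/4})}$.
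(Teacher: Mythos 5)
Your proposal follows essentially the same route as the paper's proof: the same reduction to a one-dimensional concave fitness $h$ on the $k$-element slice with its peak at $j^\ast=0.52k$ (the paper works with $\ell=k-j$ type-$b$ elements and a peak at $0.48k$, which is the same thing), the same identification of all-type-$a$ as the global optimum and all-type-$b$ as the local optimum via the endpoint comparison, the same phase decomposition, and the same escape argument (an accepted move out of the local optimum must jump across the peak to roughly $j_2\approx 0.48n$, hence flip $\Omega(n)$ specific bits). The one place where you diverge is the step you yourself flag as the main obstacle: controlling the composition at the moment the first feasible solution is reached. You propose establishing a restoring drift for the type-$a$/type-$b$ balance during the infeasible phase and converting it into concentration via a martingale or drift argument (and, as a side note, a surplus of type-$a$ ones leaves \emph{fewer}, not more, type-$a$ zeros, though your conclusion about the restoring direction is right). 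The paper avoids this machinery entirely by a symmetry argument: since the fitness of every infeasible point depends only on $|x|_1$ and the process is invariant under permutations of the element labels, the first feasible solution with $r\geq k$ items is distributed uniformly among all $r$-subsets, so a Chernoff bound for the hypergeometric distribution immediately gives at least $0.499r$ elements of each type with probability $1-e^{-\Omega(n)}$. The residual drift of the composition during the subsequent reduction from $r\leq k+n^{1/2}$ down to exactly $k$ elements is then bounded simply by excluding mutations that flip $n^{1/4}$ or more bits within the $O(n^2)$-step phase, which is also where the $e^{-\Omega(n^{1/4})}$ failure probability in the theorem originates. So your plan is sound, but the concentration step you expect to dominate the difficulty can be settled by exchangeability rather than by a drift analysis.
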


\begin{proof}

We first analyze the quality of solutions with exactly $k$ elements and show that $x^*$ is an optimal solution if it contains exactly $n/2$ elements of type $a$.
Consider a solution $x$ with $\ell$ elements of type $b$ and $k$ elements in total.

We have
$$
f(x) =  (k-\ell)( n^2 + \delta) + \ell \cdot n^2 + \sqrt{k + \ell} =  k (n^2 + \delta) - \delta \ell + \sqrt{k+ \ell}.
$$

Let $\ell \in [\max\{0,k-n/2\}, \min\{k, n/2\}]$ and consider the first derivative of the corresponding continuous function
$$
p(\ell) = k (n^2 + \delta) - \delta \ell + \sqrt{k+ \ell}. 
$$

We have $p'(\ell) = -\delta + \frac{1}{2 \sqrt{k+ \ell}} = - \frac{1}{2\sqrt{k\cdot 1.48}}+ \frac{1}{2 \sqrt{k+ \ell}}$, and $p'(\ell)=0$ for $\ell=0.48\cdot k$ and $p''(\ell)= - \frac{1}{4(k+\ell)^{3/2}}$ which is negative for all $\ell \geq 0$. 

Therefore, $p$ has its maximum at $\ell = 0.48 k$, is strongly monotonically increasing in $\ell$  in the interval $[0, 0.48\cdot k[$, and strongly monotonically decreasing for $\ell \in ]0.48 \cdot k, k]$.
We have $k=0.51n$ with implies that we consider integer values of $\ell \in [0.01n, 0.5n]$.
We have 
\begin{eqnarray*}
& & p(0.01n) < p(0.5n) \\
& \Longleftrightarrow & k(n^2+ \delta) - 0.01 \delta n + \sqrt{0.52n} < k(n^2+ \delta) - 0.5 \delta n + \sqrt{1.01n}\\
& \Longleftrightarrow & 0.49n \cdot \frac{1}{2\sqrt{0.51n \cdot 1.48}} < \sqrt{1.01n}-\sqrt{0.52n}\\
& \Longleftrightarrow &  \frac{0.49}{2 \cdot \sqrt{0.7548}} \cdot \sqrt{n}  <  (\sqrt{1.01} - \sqrt{0.52}) \sqrt{n}
\end{eqnarray*}
where the last inequality holds as $\frac{0.49}{2 \cdot \sqrt{0.7548}}< 0.2821$ and $\sqrt{1.01} - \sqrt{0.52}>0.2838$.

Hence, among the all solutions consisting of exactly $k$ elements, solutions are optimal if they contain exactly $n/2$ elements of type $a$ and are locally optimal if they contain exactly $n/2$ elements of type $b$.

Any search point with $r+1$ elements is worse than any search point with $r$ elements, $k \leq r \leq n$, as the expected weight of an element is in $\{n^2, n^2+\delta\}$ and the standard deviation of a solution is at most $\sqrt{2n}$ and $K_{\alpha}=1$ holds. Therefore, once a feasible solution has been obtained the number of elements can not be increased and the number of elements decreases until a solution with $k$ elements has been obtained for the first time.

We now show that the (1+1)~EA produces a locally optimal solution consisting of $n/2$ elements of type $b$ within $O(n^2)$ iterations with probability $1-e^{-\Omega(n^{1/4})}$. We do this by considering different phases of the algorithm of length $O(n^2)$ and bounding the failure probabilities by $e^{-\Omega(n^{1/4})}$ in each phase.
The initial solution $x$ chosen uniformly at random is infeasible and we have $0.49n < |x|_1 < 0.51n$ with probability $1-e^{-\Omega(n)}$ using Chernoff bounds. 
For two infeasible solutions $x$ and $y$ we have $f(x) > f(y)$ iff  $|x|_1 > |y|_1$. 
The number of elements in an infeasible solution is increased with probability at least $(n-k)/en=\Theta(1)$ and at most $0.02n$ steps increasing the number of elements are necessary to obtain a feasible solution. Hence, and expected number of $O(n)$ is required to produce a feasible solution and the probability to achieve a feasible solution within $O(n^2)$ steps is $1 - e^{-\Omega(n)}$ using Markov's inequality in combination with a restart argument consisting of $n$ restarts.

Until a feasible solution has been reached for the first time, there is no bias and the first feasible solution with $r\geq k$ items is chosen uniformly at random among all sets containing exactly $r$ items. 
When reaching the first feasible solution with $r\geq k$ elements, it contains in expectation $r/2$ elements of type $a$ and $r/2$ elements of type $b$. The probability that the number of type $a$ elements is less than $0.499r$ is $e^{-\Omega(n)}$ using the Chernoff bound for the hypergeometric distribution (see
Theorem~1.10.25 in~\citealp{DoerrProbabilisticTools}).
We have $r \leq k + n^{1/2}$ with probability $1 - e^{-\Omega(n^{1/2})}$ for the first feasible solution as the probability to flip at least $n^{1/2}$ in a single mutation step within $O(n^2)$ steps is $e^{-\Omega(n^{1/2})}$. 

Using arguments as before, the function
\[p_r(\ell) = r (n^2 + \delta) - \delta \ell + \sqrt{r+ \ell}\]
has its maximum at $0.48r$ and is strongly monotonically increasing in $\ell$  in the interval $[0, 0.48\cdot r[$, and strongly monotonically decreasing for $\ell \in ]0.48 \cdot r, r]$.
Hence producing from a solution $x$ an solution $y$ with $|x|_1 = |y|_1$ with a smaller number of type $b$ elements is not accepted unless a solution with at most $0.48r$ elements of type $b$ is produced which happens with probability $e^{-\Omega(n)}$ in $\Theta(n^2)$ steps. The only steps that may reduce the number of type $b$ elements are steps reducing the number of elements in the solution overall. A step reducing the number of elements has probability at least $r/(en)=\Theta(1)$. At most $n^{1/2}$ of such steps are required to obtain a solution with exactly $k$ elements and the expected number of steps required to produce from a solution with $r \leq k+n^{1/4}$ elements a solution with $k$ elements is $O(n^{1/4})$ and a solution with exactly $k$ elements has been produced in time $O(n^2)$ with probability $1-e^{-\Omega(n)}$ using again Markov's inequality in combination with $n$ restarts.
In each of these $O(n^2)$ steps, there is no mutation flipping at least $n^{1/4}$ bits with probability $e^{-\Omega(n^{1/4})}$. This implies
that the algorithm reaches a solution with $k$ elements where at least 
$$0.499\cdot(k + n^{1/2})- n^{1/4}\cdot n^{1/2} \geq 0.49k$$ elements are of type $b$ with probability $1-e^{-\Omega(n^{1/4})}$ within a phase of $O(n^2)$ steps.

Having obtained a solution with $k$ elements and at least $0.49k$ elements of type~$b$, the (1+1)~EA does not accept any solution with less elements of type $b$ unless a solution with at most $0.48k$ elements of type $b$ is produced.
Producing a solution with at most $0.48k$ elements of type $b$ requires flipping at least $0.01k$ bits which happens with probability $e^{-\Omega(n)}$ in $O(n^2)$ steps.
Let $x$ be a solution with $k$ elements where $\ell<n/2$ elements are of type $b$. The probability to produce from $x$ a solution with $k$ elements and at least $\ell+1$ elements of type $b$ is
at least 
$$(n/2- \ell)\cdot (k-\ell)/(en^2) \geq (n/2- \ell) \cdot 0.01n/ (en^2) = (n/2- \ell)/(100 en)$$
as $k- \ell \geq 0.01n$ and a type $a$ and a type~$b$ elements need to swap. 
The number of type~$b$ elements is increased until having achieved $\ell=n/2$ in expected time $O(n \log n)$ using fitness based partitions~\citep{Jansen13} with respect to $\ell$ and summing up the waiting times to increase the different values of $\ell$. Using again Markov's inequality together with the restart argument involving $n/\log n$ restarts, the probability to have not obtained the solution with $n/2$ elements of type $b$ is $e^{-\Omega(n/\log n)}$. 
Hence, the local optimal solution with $k$ elements among them $n/2$ of type $b$ is produced in $O(n^2)$ steps with probability $1-e^{-\Omega(n^{1/4})}$.

The time to producing from such a locally optimal solution an optimal solution consisting of $n/2$ elements of type $a$ is $e^{\Omega(n)}$  with probability $1- e^{-\Omega(n)}$ as the number of type $b$ elements needs to be reduced to at most $0.48k$ in order to accept an offspring with less than $n/2$ elements of type $b$. This implies that the optimization time on the instance $I$ is $e^{\Omega(n)}$ with probability $1-e^{-\Omega(n^{1/4})}$.
\end{proof}

A similar lower bound for a specific class of instances of the chance constrained minimum spanning tree problem having two types of edge weights can be obtained following the ideas given in the proof of Theorem~\ref{thm:LBoneone}.

We now show the asymptotic behaviour of the (1+1)~EA on instance $I$ through an experimental study. Table~\ref{tab:worst-case} shows for $n \in\{100,200,500,1000,1500, 2000\}$ the number of times out of 30 runs the globally optimal solution has been obtained before the locally optimal one. Note that it takes exponential time to escape a locally optimal solution. It can be observed that the fraction of successful runs obtaining the global optimum clearly decreases with $n$. For $n=2000$ no globally optimal solution is obtained within 30 runs.
\begin{table}[t]
\small
    \centering
    \begin{tabular}{|c|c|}
    \hline
       n &  Result (1+1)~EA\\ \hline
      100   & 10/30 \\ 
      200   & 11/30 \\ 
      500   & 6/30 \\ 
      1000   & 3/30 \\ 
      1500   & 1/30  \\ 
      2000   & 0/30 \\ \hline
    \end{tabular}
    \caption{Success rate for the (1+1)~EA on the worst case instance $I$.}
   \label{tab:worst-case} 
\end{table}

\section{Multi-Objective Evolutionary Algorithm}
\label{sec:MOEA}

We now introduce bi-objective formulations of the chance constrained problems with uniform and spanning tree constraints.
We use a Pareto Optimisation approach for this which computes trade-offs with respect to the expected weight~$\mu$ and variance~$v$.
We say that a solution $z$ dominates a solution $x$ (denoted as $z \preccurlyeq x$)  iff $\mu(z) \leq \mu(x)$ and $v(z) \leq v(x)$. We say that $z$ strongly dominates x (donated as $z \prec x$) iff $z \preceq x$ and $\mu(z)< \mu(x)$ or $v(z)< v(x)$.

We investigate the algorithm \gsemo~\citep{DBLP:journals/tec/LaumannsTZ04,1299908} shown in Algorithm~\ref{alg:GSEMO}, which has been frequently used in theoretical studies of Pareto optimization. It starts with a solution chosen uniformly at random and keeps at each time step a set of non dominated solutions found so far. In addition to being able to achieve strong theoretical guarantees~\citep{DBLP:journals/ec/FriedrichHHNW10,DBLP:journals/ec/FriedrichN15,DBLP:books/sp/ZhouYQ19}, \gsemo using different types of multi-objective formulation has shown strong performance in practice~\citep{DBLP:conf/nips/QianYZ15,DBLP:conf/ijcai/QianSYT17,DBLP:conf/aaai/RoostapourN0019}. We study the multi-objective evolutionary algorithms in terms of the expected time (measured in terms of iterations of the algorithm) until they have produced a population which contains an optimal solution for each $\alpha \in [1/2,1[$.

\begin{algorithm}[th]
 Choose $x \in \{0,1\}^n$ uniformly at random\;
 $P\leftarrow \{x\}$\;
\Repeat{$\mathit{stop}$}{
Choose $x\in P$ uniformly at random\;
Create $y$ by flipping each bit $x_{i}$ of $x$ with probability $\frac{1}{n}$\;
\If{$\nexists\, w \in P: w \prec y$} {
  $P \leftarrow (P \setminus \{z\in P \mid y \preceq z\}) \cup \{y\}$\;
  }
    }
\caption{Global SEMO} \label{alg:GSEMO}
\end{algorithm}

\subsection{Uniform Constraints}
\label{subsec:mo}
For the case of the uniform constraint $|x|_1 \geq k$, we consider the objective function 
$f(x) = (\mu(x), v(x))$
where 
\begin{eqnarray*}
\mu(x) = \begin{cases}
 \sum_{i=1}^n \mu_i x_i & |x|_1 \geq k\\
(k-|x|_1)\cdot (1+\sum_{i=1}^n \mu_i) & |x|_1<k
\end{cases}
\end{eqnarray*}

\begin{eqnarray*}
v(x) = \begin{cases}
 \sum_{i=1}^n \sigma^2_i x_i & |x|_1 \geq k\\
(k-|x|_1)\cdot (1+\sum_{i=1}^n \sigma^2_i) & |x|_1<k
\end{cases}
\end{eqnarray*}

Note that it gives the expected value and variance for any feasible solution, and a large penalty for any unit of constraint violation in each objective function if a solution is infeasible. 
This implies that the objective value of an infeasible solution is always worse than the value of a feasible solution.

As we have $\mu_i \geq 1$ and $\sigma_i \geq 1$, $1 \leq i \leq n$, each Pareto optimal solution contains exactly $k$ elements. This is due to the fact that we can remove from a solution $x$ with $|x|_1 >k$ any element to obtain a solution $y$ with $\mu(y) < \mu(x)$ and $v(y) < v(x)$.
We will minimize 
$f_{\lambda}(x) = \lambda \mu(x) + (1- \lambda) v(x)$
by selecting minimal elements with respect to
$f_{\lambda}(e_i) = \lambda \mu_i + (1- \lambda) \sigma_i^2$, $0 < \lambda < 1$.
For the special cases $\lambda=0$ and $\lambda=1$, we minimize $f_{\lambda}$ by minimizing $f_0(x)=(v(x), \mu(x))$ and $f_1(x)=(\mu(x), v(x))$ with respect to the lexicographic order. Note, that we are using $f_{\lambda}$ both for the evaluation of a search point $x$ as well as the evaluation of an element $e_i$.
For each fixed $\lambda \in [0,1]$, an optimal solution for $f_{\lambda}$ can be obtained by sorting the items increasing order of $f_{\lambda}$ and selecting the first $k$ of them.
For a given set $X$ of such points we denote by $X^*_{\lambda} \subseteq X$ the set of minimal elements with respect to $f_{\lambda}$. Note that all points in the sets $X^*_{\lambda}$, $0 \leq \lambda \leq 1$, are not strongly dominated in $X$ and therefore constitute Pareto optimal points when only considering the set of search points in $X$.

\begin{definition}[Extreme point of set $X$]
\label{def:extremepoint}
For a given set $X$, we call $f(x) =(\mu(x),v(x))$ an extreme point of $X$ if there is a $\lambda \in [0,1]$ such that $x \in X^*_{\lambda}$ and $v(x) = \max_{y \in X^*_{\lambda}} v(x)$.
\end{definition}

We denote by $f(X)$ the set of objective vectors corresponding to a set $X\subseteq 2^E$, and by $f(2^E)$ the set of all objective vectors of the considered search space $2^E$. The extreme points of $2^E$ are given by the extreme points of $f(2^E)$.
A crucial property of the extreme points is that they contain all objective vectors that are optimal for any $\lambda \in [0,1]$. Hence, if there is an optimal solution that can be obtained by minimizing $f_{\lambda}$ for a (potentially unknown) value of $\lambda$, then such a solution is contained in the set of search points corresponding to the extreme points of $2^E$.

In the following, we relate an optimal solution of
\begin{eqnarray*}
g(x) = \sum_{i=1}^n \mu_i x_i + K_{\alpha} \cdot\left(\sum_{i=1}^n \sigma_i^2 x_i\right)^{1/2}
\text{\quad subject to } |x|_1 \geq k
\end{eqnarray*}
to an optimal solution of 
\begin{eqnarray*}
g_R(x) = R \cdot \sum_{i=1}^n \mu_i x_i + K_{\alpha} \cdot\left(\sum_{i=1}^n \sigma_i^2 x_i \right) 
\text{\quad subject to } |x|_1 \geq k
\end{eqnarray*}
for a given parameter $R\geq0$ that determines the weightening of $\mu(x)$ and $v(x)$. Note that $g_R$ is a linear combination of the expected value and the variance and optimizing $g_R$ is equivalent to optimizing $f_{\lambda}$ for $\lambda=R/(R+K_{\alpha})$  as we have $g_R(x) = (R + K_{\alpha}) \cdot f_{\lambda}(x)$ in this case. We use $g_R$ to show that there is a weightening that leads to an optimal solution for $g$ following the notation given in \cite{DBLP:journals/dam/IshiiSNN81}, but will work with the normalized weightening of $\lambda$ when analyzing our multi-objective approach. 

Let $x^*$ be an (unknown) optimal solution for $g$ and let $D(x^*) = \left(\sum_{i=1}^n \sigma_i^2 x^*_i \right)^{1/2}$ be its standard deviation. 
Lemma~\ref{lem:optgD} follows directly from the proof of Theorems 1--3 in \cite{DBLP:journals/dam/IshiiSNN81} where it has been shown to hold for the constraint where a feasible solution has to be a spanning tree. However, the proof only uses the standard deviation of an optimal solution and relates this to the weightening of the expected value and the variance. It therefore holds for the whole search space independently of the constraint that is imposed on it.
Therefore, it also holds for the uniform constraint where we require $|x|_1 \geq k$.
\begin{lemma}[follows from Theorems 1--3 in \citealp{DBLP:journals/dam/IshiiSNN81}]
\label{lem:optgD}
An optimal solution for $g_{2D(X^*)}$ is also optimal for $g$.
\end{lemma}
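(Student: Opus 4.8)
The plan is to give a self-contained argument based on the concavity of the square-root function, rather than tracing through the cited theorems. Throughout I would write $\mu(x) = \sum_{i=1}^n \mu_i x_i$ and $v(x) = \sum_{i=1}^n \sigma_i^2 x_i$ and abbreviate $D := D(x^*) = v(x^*)^{1/2}$, so that $g(x) = \mu(x) + K_\alpha v(x)^{1/2}$ and $g_R(x) = R\,\mu(x) + K_\alpha v(x)$. First I would record that since every feasible solution selects at least $k \geq 1$ items and each $\sigma_i \geq 1$, we have $v(x^*) \geq k \geq 1$ and hence $D > 0$, so that dividing by $D$ in what follows is legitimate and the case distinction is unproblematic.

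The core step is the tangent inequality for the concave map $t \mapsto t^{1/2}$ at the point $t = D^2$. Concavity gives, for every $t \geq 0$,
\[
t^{1/2} \;\le\; D + \frac{1}{2D}\,(t - D^2) \;=\; \frac{t}{2D} + \frac{D}{2},
\]
with equality exactly at $t = D^2$. Applying this with $t = v(\hat x)$ for an arbitrary feasible $\hat x$ and recalling $g_{2D}(\hat x) = 2D\,\mu(\hat x) + K_\alpha v(\hat x)$, I obtain
\[
g(\hat x) \;=\; \mu(\hat x) + K_\alpha v(\hat x)^{1/2} \;\le\; \mu(\hat x) + \frac{K_\alpha}{2D}\,v(\hat x) + \frac{K_\alpha D}{2} \;=\; \frac{1}{2D}\,g_{2D}(\hat x) + \frac{K_\alpha D}{2}.
\]
Now let $\hat x$ be any optimal solution of $g_{2D}$ over $\{x : |x|_1 \ge k\}$. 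Since $x^*$ is feasible, optimality of $\hat x$ gives $g_{2D}(\hat x) \le g_{2D}(x^*) = 2D\,\mu(x^*) + K_\alpha D^2$, and substituting this into the previous display yields the chain
\[
g(\hat x) \;\le\; \frac{1}{2D}\,g_{2D}(\hat x) + \frac{K_\alpha D}{2} \;\le\; \frac{1}{2D}\bigl(2D\,\mu(x^*) + K_\alpha D^2\bigr) + \frac{K_\alpha D}{2} \;=\; \mu(x^*) + K_\alpha D \;=\; g(x^*),
\]
where the final equality is precisely the tangent inequality evaluated at $x^*$, which holds with equality because $v(x^*) = D^2$. As $x^*$ is optimal for $g$, this forces $g(\hat x) = g(x^*)$, so $\hat x$ is optimal for $g$ as well.

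I expect the only genuinely non-obvious point to be the choice $R = 2D$, which is what makes the two inequalities collapse to an equality at $x^*$. It is dictated by matching slopes: normalizing $g_R$ by its leading coefficient gives $\mu(x) + (K_\alpha/R)\,v(x)$, which agrees with the linearization $\mu(x) + (K_\alpha/2D)\,v(x)$ exactly when $R = 2D$, and any other choice leaves a residual that does not telescope to $g(x^*)$. Everything else is routine algebra. It is worth emphasizing that the feasibility constraint enters only through guaranteeing $D > 0$ and ensuring $\hat x$ and $x^*$ are compared within the same feasible region; the argument is otherwise insensitive to the form of the constraint, which is exactly why the same lemma applies both to the uniform constraint and to the spanning-tree constraint of~\cite{DBLP:journals/dam/IshiiSNN81}.
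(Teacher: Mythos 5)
Your proof is correct, and it is genuinely different in character from what the paper does: the paper gives no self-contained argument at all, but simply asserts that the lemma ``follows from the proof of Theorems 1--3'' of Ishii et al.\ and observes that that proof only uses the standard deviation of an optimal solution, so it transfers from the spanning-tree constraint to any constraint. Your tangent-line argument --- bounding $t^{1/2}$ by its linearization at $t=D^2$, noting that $\tfrac{1}{2D}g_{2D}(\hat x)+\tfrac{K_\alpha D}{2}$ is exactly that linearization of $g$, and then sandwiching $g(\hat x)$ between $g(x^*)$ from above (via optimality of $\hat x$ for $g_{2D}$ and equality of the tangent at $x^*$) and from below (via optimality of $x^*$ for $g$) --- replaces the external citation with three lines of elementary convexity, and it makes transparent both why $R=2D$ is the right weighting (slope matching) and why the constraint set is irrelevant (it enters only by making $x^*$ and $\hat x$ comparable), which is precisely the point the paper needs in order to reuse the lemma for the uniform constraint. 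Two small points worth making explicit: multiplying the tangent inequality by $K_\alpha$ requires $K_\alpha\geq 0$, which the paper guarantees via $\alpha\in[1/2,1)$; and your observation that $D>0$ relies on $k\geq 1$ (or, for spanning trees, on $n\geq 2$), which excludes only the degenerate case where the empty set is feasible. Neither affects correctness in the paper's setting.
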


Based on Lemma~\ref{lem:optgD}, an optimal solution for $f_{\lambda}$, where $\lambda= 2D(X^*)/(2D(X^*)+K_{\alpha})$, is also optimal for $g$. As we are dealing with a uniform constraint, an optimal solution for $f_{\lambda}$ can be obtained by greedily selecting elements according to $f_{\lambda}$ until $k$ elements have been included. The extreme points allow to cover all values of~$\lambda$ where optimal solutions differ as they constitute the values of $\lambda$ where the optimal greedy solution might change and we bound the number of such extreme points in the following.

In order to identify the extreme points of the Pareto front, we observe that the order of two elements $e_i$ and $e_j$ with respect to a greedy approach selecting always a minimal element with respect to $f_{\lambda}$ can only change for one fixed value of $\lambda$.
We define $\lambda_{i,j} = \frac{\sigma_j^2 - \sigma_i^2}{(\mu_i-\mu_j) +(\sigma^2_j - \sigma^2_i)}$ for the pair of items $e_i$ and $e_j$ where $\sigma^2_i < \sigma^2_j$ and $\mu_i > \mu_j$ holds, $1 \leq i < j \leq n$.

Consider the set $\Lambda=\{\lambda_0, \lambda_1, \ldots, \lambda_{\ell}, \lambda_{\ell+1}\}$ where $\lambda_1, \ldots, \lambda_{\ell}$ are the values $\lambda_{i,j}$ in increasing order and 
$\lambda_0=0$ and $\lambda_{\ell+1}=1$. The key observation is that computing Pareto optimal solutions that are optimal solutions for $f_{\lambda}$ and every $\lambda \in \Lambda$ gives the extreme points of the problem. 

\begin{lemma}
\label{lem:number-expoints}
Each extreme point of the multi-objective formulation is Pareto optimal and optimal  with respect to $f_{\lambda}$ for at least one $\lambda \in \Lambda$. 
The number of extreme points is at most $n(n-1)/2+2 \leq n^2$.
\end{lemma}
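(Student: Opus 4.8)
The claim has two parts: first, that every extreme point is Pareto optimal and optimal for $f_\lambda$ for some $\lambda \in \Lambda$; and second, the counting bound of $n(n-1)/2+2$. The plan is to establish the first part from the definition of extreme points and the structure of the greedy selection, and then derive the counting bound by arguing that the optimal greedy solution can only change at the critical values $\lambda_{i,j}$.

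Let me sketch my reasoning.

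\textbf{Part 1 (Pareto optimality and optimality at some $\lambda \in \Lambda$).} By Definition~\ref{def:extremepoint}, an extreme point corresponds to some $x \in X^*_\lambda$ with maximal variance among minimizers of $f_\lambda$. The remark immediately preceding the definition already notes that all points in $X^*_\lambda$ are not strongly dominated and hence are Pareto optimal, so Pareto optimality is essentially inherited. The substantive task is to show we may take $\lambda$ from the finite set $\Lambda$ rather than an arbitrary value in $[0,1]$. The key structural fact is that an optimal solution for $f_\lambda$ under the uniform constraint is obtained greedily: sort the items by the linear key $f_\lambda(e_i) = \lambda \mu_i + (1-\lambda)\sigma_i^2$ and take the cheapest $k$. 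Each key is an affine function of $\lambda$, so as $\lambda$ ranges over $[0,1]$ the \emph{relative order} of any two items $e_i, e_j$ can switch at most once, precisely at the value $\lambda_{i,j}$ where $f_\lambda(e_i) = f_\lambda(e_j)$; solving this equation gives exactly the formula defining $\lambda_{i,j}$. Between two consecutive critical values the sorted order, and hence the greedy solution (and therefore the set of attainable extreme objective vectors), is constant.

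\textbf{Part 2 (counting).} First I would observe that the number of unordered pairs $\{e_i, e_j\}$ is $\binom{n}{2} = n(n-1)/2$, so there are at most that many distinct crossing values $\lambda_{i,j}$ in $(0,1)$; adding the two boundary values $\lambda_0 = 0$ and $\lambda_{\ell+1} = 1$ gives $|\Lambda| \le n(n-1)/2 + 2$. The plan is then to argue that every extreme point is realized at some $\lambda \in \Lambda$, so that the number of \emph{distinct} extreme objective vectors is bounded by $|\Lambda|$. Concretely: on each open interval between consecutive critical values the greedy order is fixed, so the optimal greedy set is constant and contributes the same objective vector across that whole interval; the only places where a \emph{new} extreme point can appear are the endpoints of these intervals, i.e.\ the critical values themselves together with the two boundaries. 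Taking the tie-breaking convention built into the definition (maximal variance among $f_\lambda$-minimizers) ensures that each extreme point is captured at one of the values in $\Lambda$, yielding the bound $n(n-1)/2 + 2 \le n^2$.

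\textbf{Main obstacle.} The routine linear-algebra (affine keys cross at most once, formula for $\lambda_{i,j}$) is easy; the delicate point is the bookkeeping that ties \emph{extreme points} (a notion defined via maximal variance among minimizers) cleanly to the critical values in $\Lambda$. One must be careful that at a crossing value $\lambda_{i,j}$ two items are tied and multiple greedy solutions coexist, so the extreme point — selected as the maximal-variance minimizer — is exactly the solution that includes the higher-variance item of the tied pair. I expect verifying that this tie-breaking rule makes every extreme point coincide with an optimum attained at a $\lambda \in \Lambda$ (and that no extreme point is missed in the interior of an interval, where the minimizer is unique and thus cannot be an endpoint of the variance range in a way that introduces a new vector) to be the step requiring the most care. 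Everything else reduces to counting pairs.
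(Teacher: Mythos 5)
Your proposal is correct and follows essentially the same route as the paper's proof: both reduce the claim to the observation that each key $f_{\lambda}(e_i)=\lambda\mu_i+(1-\lambda)\sigma_i^2$ is affine in $\lambda$, so the relative order of any incomparable pair flips exactly once at $\lambda_{i,j}$, the greedy optimum is constant between consecutive critical values, and counting pairs plus the two boundary values gives $n(n-1)/2+2$. The tie-breaking subtlety you flag (maximal-variance minimizer at a crossing) is handled only implicitly in the paper as well, so your plan is, if anything, slightly more explicit about the one delicate step.
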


\begin{proof}
For each extreme point $f(x)$ there is a $\lambda \in [0,1]$ such that $x$ is optimal for $f_{\lambda}$ which implies that $x$ is Pareto optimal.
Let $X^* \subseteq 2^E$ be the set of solutions corresponding to the extreme points of the Pareto front.
For each extreme point $f(x^*) \in f(X^*)$, there exists a $\lambda^* \in [0,1]$ such that 
$f_{\lambda^*}(x^*) = \min_{x \in 2^E} f_{\lambda^*}(x)$ and $f_{\lambda^*}(y) > \min_{x \in 2^E} f_{\lambda^*}(x)$ for all $f(y) \not =f(x^*)$.

If an item $e_i$ dominates an item $e_j$, i.e. we have $\mu_i \leq \mu_j$ and $\sigma^2_i \leq \sigma^2_j$, then $e_i$ can be included prior to $e_j$ in a greedy solution for any $f_{\lambda}$.
Assume that the item $e_1, \ldots, e_n$ are sorted in increasing order of the variances $\sigma^2_i$, i.e $i\leq j$ iff $\sigma^2_i \leq \sigma^2_j$.
For two items $e_i$ and $e_j$, $i < j$, that are incomparable, i.e. where $\sigma^2_i < \sigma^2_j$ and $\mu_i > \mu_j$ holds, the order of the items for $f_{\lambda}$ changes from $f_{\lambda}(e_i) < f_{\lambda}(e_j)$ to $f_{\lambda}(e_j) < f_{\lambda}(e_i)$ at exactly one particular threshold value $\lambda$.

Consider the items $e_i$ and $e_j$ with $\sigma^2_i < \sigma^2_j$ and $\mu_i > \mu_j$. 
We have $f_{\lambda}(e_i)< f_{\lambda}(e_j)$ iff
\begin{eqnarray*}
& & \lambda \mu_i + (1-\lambda) \sigma^2_i < \lambda \mu_j + (1-\lambda) \sigma^2_j\\
& \Longleftrightarrow & \lambda /(1-\lambda) < (\sigma^2_j - \sigma^2_i)/(\mu_i-\mu_j)\\
& \Longleftrightarrow & \lambda < \frac{\sigma_j^2 - \sigma_i^2}{(\mu_i-\mu_j) +(\sigma^2_j - \sigma^2_i)},
\end{eqnarray*}.

Moreover, we have 
$$f_{\lambda}(e_i)= f_{\lambda}(e_j) \text{ iff } \lambda = \frac{\sigma_j^2 - \sigma_i^2}{(\mu_i-\mu_j) +(\sigma^2_j - \sigma^2_i)}$$
and 
$$f_{\lambda}(e_i)> f_{\lambda}(e_j) \text{ iff } \lambda > \frac{\sigma_j^2 - \sigma_i^2}{(\mu_i-\mu_j) +(\sigma^2_j - \sigma^2_i)}.$$

We define $\lambda_{i,j} = \frac{\sigma_j^2 - \sigma_i^2}{(\mu_i-\mu_j) +(\sigma^2_j - \sigma^2_i)}$ for the pair of items $e_i$ and $e_j$ where $\sigma^2_i < \sigma^2_j$ and $\mu_i > \mu_j$ holds, $1 \leq i < j \leq n$.  Note $\lambda_{i,j} \in [0,1]$ if $\sigma^2_i < \sigma^2_j$ and $\mu_i > \mu_j$, $1 \leq i < j \leq n$, and that these values of $\lambda$ are the only values where the order between the two elements according to $f_{\lambda}$ can change. This implies that these are the only weightenings where the greedy solution (which is an optimal solution) may change.

Consider the set $\Lambda=\{\lambda_0, \lambda_1, \ldots, \lambda_{\ell}, \lambda_{\ell+1}\}$ where $\lambda_1, \ldots, \lambda_{\ell}$ are the values $\lambda_{i,j}$ in increasing order and 
$\lambda_0=0$ and $\lambda_{\ell+1}=1$.
We have $\ell \leq n(n-1)/2$ as we only need to consider pairs of items and therefore $|\Lambda| \leq n(n-1)/2+2 \leq n^2$.
\end{proof}

In the following, we assume that $v_{\max} \leq \mu_{\max}$ holds. Otherwise, the bound can be tightened by replacing $v_{\max}$ by $\mu_{\max}$.
The following lemma gives an upper on the expected time until \gsemo has obtained Pareto optimal solution of minimal variance. Note that this solution is optimal for $f_0$. We denote by $P_{\max}$ the maximum population size that \gsemo encounters during the run of the algorithm.

\begin{lemma}
\label{lem:MO-minvariance}
The expected time until \gsemo has included a Pareto optimal search point of minimal variance in the population is $O(P_{\max}n^2(\log n+ \log v_{\max}))$.
\end{lemma}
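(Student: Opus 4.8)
The plan is to track the solution of smallest variance currently in the population and to show that its variance is driven down to the minimum possible value by a multiplicative drift argument. Write $x^\ast$ for the Pareto optimal solution of minimal variance, i.e.\ the set consisting of the $k$ items of smallest $\sigma_i^2$, and let $v^\ast = v(x^\ast)$. The first observation is that $V_t := \min_{x\in P} v(x)$ is non-increasing over time: a solution attaining the current minimum variance can only be removed from $P$ by a solution that (strongly) dominates it, and such a solution has variance at most $V_t$. Hence it suffices to bound the expected time until $V_t = v^\ast$, since the individual then attaining this value is a Pareto optimal search point of minimal variance (the issue of breaking ties towards smaller $\mu$ is addressed at the end).

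Before the main argument I would dispose of two cheap preliminary phases, each driven by single-bit flips and each contributing only a lower-order $O(P_{\max} n\log n)$ term. If the current minimum-variance individual is infeasible ($|x|_1<k$), then by the penalty construction inserting any missing item strictly decreases both penalised objectives, so the offspring dominates its parent and is accepted; a fitness-level argument over the value $|x|_1$ reaches feasibility. If the individual has more than $k$ items, deleting any item yields a strictly dominating solution of smaller variance, so the number of items is driven down to exactly $k$. Thus within $O(P_{\max} n\log n)$ expected steps the minimum-variance individual has exactly $k$ items, and all later minimum-variance individuals have exactly $k$ items as well.

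For the main phase, let $x$ be the minimum-variance individual with $|x|_1=k$, and set $\Phi(x)=v(x)-v^\ast\ge 0$. If $\Phi(x)>0$ then $x$ contains at least one item outside the $k$ smallest-variance items (``bad'') and omits an equal number of the $k$ smallest-variance items (``good''). Any swap exchanging a bad item for a good one is a specific two-bit flip of probability at least $\tfrac{1}{en^2}$, is selected with probability at least $1/P_{\max}$, produces an offspring $y$ with $v(y)<v(x)$, and — since $v(y)<V_t$ — cannot be dominated by any current population member, so $y$ enters $P$ and becomes the new minimum-variance individual, whence $\Phi$ drops by exactly the variance saved by that swap. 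Matching the bad items present against the good items absent, these disjoint swaps have reductions summing to exactly $v(x)-v^\ast$, and every pair contributes a non-negative reduction because each bad item has variance at least that of each good item. Summing probability times reduction over this matching yields
\[
E[\Phi(x)-\Phi(y)\mid x]\;\ge\;\frac{1}{eP_{\max}n^2}\,(v(x)-v^\ast)\;=\;\frac{1}{eP_{\max}n^2}\,\Phi(x),
\]
so we obtain multiplicative drift with rate $\delta=\Omega(1/(P_{\max}n^2))$. Since $\Phi\le k\,v_{\max}\le n v_{\max}$ and its smallest positive value is at least $1$ (variances taken to be integral, as reflected by the $\log v_{\max}$ term), the multiplicative drift theorem \cite{DoerrProbabilisticTools} gives an expected time of $O\!\big(P_{\max}n^2(1+\ln(n v_{\max}))\big)=O\!\big(P_{\max}n^2(\log n+\log v_{\max})\big)$ to reach $\Phi=0$, i.e.\ variance $v^\ast$.

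I expect the main obstacle to be exactly this drift step: one must argue carefully that an accepted improving swap really decreases the tracked potential (the new minimum-variance individual is the freshly created offspring, not some incomparable survivor), and that the improving swaps can be organised into a matching whose reductions telescope to $v(x)-v^\ast$; this is what turns a naive additive bound into the multiplicative one and produces the logarithmic factor. A minor remaining point is that $f_0$ also demands minimal $\mu$ among minimum-variance solutions; once variance $v^\ast$ is attained this is achieved by the same kind of variance-neutral, $\mu$-decreasing swaps and is covered by the same runtime bound. As the preliminary phases are of lower order, the overall expected time is dominated by the main phase, giving the claimed $O(P_{\max}n^2(\log n+\log v_{\max}))$.
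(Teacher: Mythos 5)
Your proof is correct and follows essentially the same route as the paper's: a fitness-level phase on $|x|_1$ to reach feasibility, followed by multiplicative drift on the variance of the minimum-variance population member, driven by the swaps toward $x^*$ whose reductions telescope to $v(x)-v^*$ and each occur with probability $\Omega(1/(P_{\max}n^2))$. The only cosmetic difference is that the paper folds the case $|x|_1>k$ into the same drift argument (single-bit deletions there give an even larger drift) instead of treating it as a separate preliminary phase, which also sidesteps your unjustified side claim that all later minimum-variance individuals retain exactly $k$ items.
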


\begin{proof}
Starting with an infeasible solution this implies that the population size is $1$ until a feasible solution has been obtained for the first time. The fitness of an infeasible solution is determined by the number of elements $r=|x|_1<k$ in the solution. It increases with probability at least $r/(en)$ in the next iteration. As no solution with less then $r$ elements is accepted until a feasible solution has been obtained for the first time, the number of elements increases to at least $k$ within $O(n \log n)$ steps using a fitness level argument (see \citealp{Jansen13}) with respect to the number of elements in an infeasible solution.

Once a feasible solution has been obtained for the first time, only feasible solutions are accepted during the optimization process.
We use multiplicative drift analysis \citep{DoerrJohannsenWinzenALGO12} and consider in each step the search point $x$ with the smallest value of $v(x)$ in the population. Let $r=|x|_1\geq k$. If $r>k$, then flipping any of the 1-bits is accepted and the value of $v(x)$ decreases at least by a factor of $1-1/(P_{\max}en)$ in expectation in the next iteration.
If $r=k$, consider a Pareto optimal solution $x^*$ that has minimal variance. Assume that there are $s\geq 1$ elements contained in $x$ that are not contained in $x^*$. Then removing any of these elements from $x$ and including one of the missing $s$ elements from $x^*$ is accepted as otherwise $x^*$ can not be a Pareto optimal solution of minimal variance. Furthermore, there are $s$ such operations that are all accepted and in total reduce the variance from $v(x)$ to $v(x^*)$.
Hence, the variance of the solution having the smallest variance in the population reduces by  at least $1/(P_{\max}en^2)\cdot(v(x) -v(x^*))$ in expectation in the next iteration. Note, that this reduction is less than the one in the case of $r>k$. Using the  multiplicative drift theorem \citep{DoerrJohannsenWinzenALGO12}, a Pareto optimal solution of minimal variance is obtained in expected time $O(P_{\max}n^2(\log n+ \log v_{\max}))$.
\end{proof}

Lemma~\ref{lem:MO-minvariance} directly gives an upper bound for \gsemo on the worst case instance $I$ for which the (1+1)~EA has an exponential optimization time.
The feasible solution of minimal variance is optimal for instance $I$. Furthermore, we have $v_{\max} \leq 2n$ and the variance can only take on $O(n)$ different values. which implies that $P_{\max}=O(n)$ holds for instance~$I$. Therefore, we get the following result for \gsemo on the worst case instance $I$ of the (1+1)~EA. 
\begin{theorem}
The expected time until \gsemo has obtained an optimal solution for the instance$I$ is $O(n^3 \log n)$.
\end{theorem}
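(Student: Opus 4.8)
The plan is to prove the statement by specializing Lemma~\ref{lem:MO-minvariance} to the instance $I$, so the real content is just to identify the optimum and to estimate the two instance-dependent quantities $P_{\max}$ and $v_{\max}$ entering that bound. First I would recall from the analysis of $I$ in Section~\ref{sec3} that the global optimum is the feasible solution consisting of exactly $k$ elements with $n/2$ of type $a$, and that this solution has minimal variance among all feasible solutions. Consequently, a Pareto optimal search point of minimal variance is exactly an optimal solution for $I$, so the expected time until \gsemo holds an optimum in its population is precisely the quantity bounded by Lemma~\ref{lem:MO-minvariance}, namely $O(P_{\max}\, n^2 (\log n + \log v_{\max}))$.

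Next I would estimate $v_{\max}$. Since every item has variance $1$ or $2$ and any feasible solution uses at most $n$ items, the total variance $v(x)$ of any accepted solution is at most $2n$, so $\log v_{\max} = O(\log n)$. The main step is the bound $P_{\max} = O(n)$. Here I would use that, once the first feasible solution has been produced, only feasible solutions are ever accepted, and that \gsemo maintains an antichain with respect to $\preceq$, i.e. distinct population members have strictly different objective vectors and in particular strictly different variance values. On $I$ the variance of a feasible solution equals $r + \ell$, where $r=|x|_1$ is the number of elements and $\ell$ the number of type-$b$ elements; this takes only $O(n)$ distinct integer values. Hence an antichain can contain at most $O(n)$ objective vectors, giving $P_{\max}=O(n)$.

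Substituting $P_{\max}=O(n)$ and $\log v_{\max}=O(\log n)$ into $O(P_{\max}\, n^2 (\log n + \log v_{\max}))$ yields $O(n \cdot n^2 \cdot \log n) = O(n^3 \log n)$, which completes the argument. I expect the variance-counting argument underpinning $P_{\max}=O(n)$ to be the only nontrivial point, as everything else is a direct substitution into Lemma~\ref{lem:MO-minvariance}; the subtlety to get right is that the $O(n)$ distinct variance values, combined with the antichain property of the \gsemo population, genuinely cap the population size rather than merely the number of \emph{Pareto optimal} vectors.
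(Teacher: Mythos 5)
Your proposal is correct and follows essentially the same route as the paper: the paper likewise observes that the feasible solution of minimal variance is optimal for $I$, that the variance takes only $O(n)$ distinct values so $P_{\max}=O(n)$ and $\log v_{\max}=O(\log n)$, and then substitutes into Lemma~\ref{lem:MO-minvariance}. Your antichain justification for $P_{\max}=O(n)$ is a slightly more explicit version of what the paper merely asserts.
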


We now present the result for the general class with a uniform constraint.
Based on a Pareto optimal solution having the minimal variance, \gsemo can construct all other extreme points and we obtain the following result.

\begin{theorem}
\label{theo:gsemo-optimal-unif}
Considering the chance constrained problem with a uniform constraint, the expected time until \gsemo has computed a population which includes an optimal solution for any choice of $\alpha \in [1/2,1[$ is $O(P_{\max} n^2 \ell (\log n + \log v_{\max}))$.
\end{theorem}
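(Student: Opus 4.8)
The plan is to show that \gsemo eventually stores, for every $\lambda\in\Lambda$, an optimal solution of $f_\lambda$ of maximal variance (the extreme point associated with that $\lambda$), and that this suffices for all confidence levels. By Lemma~\ref{lem:number-expoints} every extreme point is Pareto optimal, and conversely, by Lemma~\ref{lem:optgD} together with the discussion preceding it, for each $\alpha\in[1/2,1[$ an optimal solution of $g$ is an optimal solution of $f_\lambda$ for $\lambda=2D(x^*)/(2D(x^*)+K_\alpha)\in(0,1]$. Since the greedily optimal selection for $f_\lambda$ changes only at the threshold values collected in $\Lambda$, this optimum coincides in objective value (hence in $g$-value) with the extreme point for some $\lambda_j\in\Lambda$. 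Thus it is enough to bound the expected time until the population simultaneously contains the extreme points $X_0,X_1,\dots,X_{\ell+1}$ for $\lambda_0<\lambda_1<\dots<\lambda_{\ell+1}$ in $\Lambda$. I would first invoke Lemma~\ref{lem:MO-minvariance} to place the minimal-variance extreme point $X_0$ (optimal for $f_0$) into the population in expected time $O(P_{\max}n^2(\log n+\log v_{\max}))$.

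The main work is to traverse the extreme points in order of increasing $\lambda$, equivalently increasing variance, arguing that once $X_i$ is present \gsemo reaches $X_{i+1}$ quickly. Two structural facts drive this. First, by elitism \gsemo never loses a Pareto-optimal objective vector: a newly accepted search point can only displace members it weakly dominates, and since an extreme point is globally Pareto optimal, any displacing point must share its objective vector and is itself $f_{\lambda_i}$-optimal; hence once $f(X_i)$ is represented it stays represented until the end. Second, following the exchange observation for two adjacent weightings, $X_i$ is simultaneously an $f_{\lambda_{i+1}}$-optimal solution of minimal variance while $X_{i+1}$ is one of maximal variance, and all elements of the symmetric difference $X_i\triangle X_{i+1}$ share the same $f_{\lambda_{i+1}}$-value. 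Consequently any $f_{\lambda_{i+1}}$-optimal $z$ with $v(z)<v(X_{i+1})$ admits a variance-increasing $2$-bit swap (remove an element of $z\setminus X_{i+1}$, insert one of $X_{i+1}\setminus z$ of larger variance) that leaves $f_{\lambda_{i+1}}$ unchanged; such a move produces a non-dominated, Pareto-optimal offspring and is therefore accepted.

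For the $i$-th transition I would run a multiplicative drift argument in the spirit of Lemma~\ref{lem:MO-minvariance}. Let $z$ be the $f_{\lambda_{i+1}}$-optimal search point of maximal variance currently in the population and take the potential $\Phi=v(X_{i+1})-v(z)$, which starts at $\Phi\le v(X_{i+1})\le n v_{\max}$ with $z=X_i$. Selecting $z$ (probability $\ge 1/P_{\max}$) and performing one of the beneficial swaps above (probability $\ge 1/(en^2)$ each) decreases $\Phi$ by the corresponding variance difference; summing these contributions over all beneficial swaps and comparing against the sorted matching of $z\setminus X_{i+1}$ with $X_{i+1}\setminus z$ shows the total available decrease is at least $\Phi$, so $E[\Phi_t-\Phi_{t+1}\mid\Phi_t]\ge \Phi_t/(eP_{\max}n^2)$. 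The multiplicative drift theorem then yields $X_{i+1}$ in expected time $O(P_{\max}n^2(\log n+\log v_{\max}))$. Summing over the initial step and the at most $\ell+1$ transitions, and using that found extreme points persist, gives the claimed bound $O(P_{\max}n^2\ell(\log n+\log v_{\max}))$.

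I expect the crux to be the per-transition step: establishing rigorously that $X_i$ and $X_{i+1}$ are connected through a chain of accepted, strictly variance-increasing swaps preserving $f_{\lambda_{i+1}}$-optimality (the interchangeability of the tied symmetric-difference elements), and calibrating the drift so that the summed swap contributions give $\delta=\Theta(1/(P_{\max}n^2))$ rather than the weaker $\Theta(1/(P_{\max}n^3))$ that a single swap would yield. Care is also needed for degenerate cases (coinciding threshold values making several pairs swap at the same $\lambda_{i+1}$, and the bound being read for $\ell\ge 1$) and for lower-bounding the minimal positive variance gap so the drift logarithm simplifies to $\log n+\log v_{\max}$, exactly as in Lemma~\ref{lem:MO-minvariance}.
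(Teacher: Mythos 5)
Your proposal follows essentially the same route as the paper's proof: first obtain the minimal-variance extreme point via Lemma~\ref{lem:MO-minvariance}, then traverse the extreme points in order of increasing $\lambda$ using the identity $v_{\lambda_i}^{\max}=v_{\lambda_{i+1}}^{\min}$ and a multiplicative drift argument on the variance gap, where the roughly $r^2$ available two-element swaps between the current $f_{\lambda_{i+1}}$-optimal solution and the target extreme point yield a drift of order $\Phi/(P_{\max}n^2)$ per step and hence $O(P_{\max}n^2(\log n+\log v_{\max}))$ per transition, summed over the $\ell$ transitions. The argument and the resulting bound match the paper's.
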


\begin{proof}
We assume that we have already included a Pareto optimal solution of minimal variance $v_{\min}$ into the population.
Let 
$$v_{\lambda}^{\max}= \max_{x \in 2^E} \left\{ v(x) \mid f_{\lambda}(x) = \min_{z \in 2^E} f_{\lambda}(z) \right\}$$ 
and 
$$v_{\lambda}^{\min}= \min_{x \in 2^E} \left\{ v(x) \mid f_{\lambda}(x) = \min_{z \in 2^E} f_{\lambda}(z) \right\}$$
be the maximal and minimal variance of any optimal solution for the linear weightening $f_{\lambda}$.

Note that we have $v_{\lambda}^{\max} = v_{\lambda}^{\min}$ for $\lambda=0$ as the Pareto optimal objective vector of minimal variance is unique.
Hence, the Pareto optimal solution of minimal variance $v_{\min}=v_{0}^{\max}$ is a solution of maximal variance for $\lambda=0$.
Consider $\lambda_i$, $0 \leq i \leq \ell$. We have $v_{\lambda_i}^{\max}= v_{\lambda_{i+1}}^{\min}$ as the extremal point that is optimal for $f_{\lambda_i}$ and $f_{\lambda_{i+1}}$ has the largest variance for $f_{\lambda_i}$ and the smallest variance for $f_{\lambda_{i+1}}$ among the corresponding sets of optimal solutions.

Assume that we have already included into the population a search point $x$ that is minimal with respect to $f_{\lambda_i}$ and has maximal variance $v_{\lambda_i^{\max}}$ among all these solutions.
The solution $x$ is also optimal with respect to $f_{\lambda_{i+1}}$ and we have $v_{\lambda_i}^{\max}= v_{\lambda_{i+1}}^{\min}$. We let $r$ be the number of elements contained in $x$ but not contained in the optimal solution $y$ for $f_{\lambda_{i+1}}$ that has variance $v_{\lambda_{i+1}^{\max}}$. As both solutions contain $k$ elements, differ by $r$ elements and $y$ has maximal variance with respect to $f_{\lambda_{i+1}}$, there are $r^2$ 2-bit flips that bring down the distance $d(x) = v_{\lambda_{i+1}}^{\max} - v(x) \leq v_{\lambda_{i+1}}^{\max} - v_{\lambda_{i}}^{\max}$. Using the multiplicative drift theorem~\citep{DoerrJohannsenWinzenALGO12} where we always choose the solution that is optimal with respect $f_{\lambda_{i+1}}$ and has the largest variance, the expected time to reach such as solution of variance $v_{\lambda_{i+1}^{\max}}$ is $O(P_{\max}n^2 \log (v_{\lambda_{i+1}}^{\max} - v_{\lambda_{i}}^{\max}))=O(P_{\max}n^2 (\log n + \log v_{\max}))$.
Summing up over the different values of $i$, we get $O(P_{\max}n^2 \ell (\log n + \log v_{\max}))$ as an upper bound on the expected time to generate all extreme points.
\end{proof}

\subsection{Chance Constrained Minimum Spanning Trees}

We now extend the previous results to the chance constrained minimum spanning tree problem 
where edge weights are independent and chosen according to a normal distribution.  
Note that using the expected weight and the variance of a solution as objectives results in a bi-objective minimum spanning tree problems for which a runtime analysis of \gsemo has been provided in \cite{DBLP:journals/eor/Neumann07}.

Let $c(x)$ be the number of connected components of the solution $x$. 
We consider the bi-objective formulation for the multi-objective minimum spanning tree problem given in \cite{DBLP:journals/eor/Neumann07}. Let $w_{ub} = n^2\cdot \max\{\mu_{\max}, v_{\max}\}$. 
The fitness of a search point $x$ is given as 
$$f(x) = (\mu(x), v(x))$$ where 
$$\mu(x) = (c(x)-1)\cdot w_{ub} +  \sum_{i=1}^m \mu_i x_i$$
and 
$$v(x) = (c(x)-1)\cdot w_{ub} +  \sum_{i=1}^m \sigma^2_i x_i.$$
It gives a large penalty for each additional connected component.
We transfer the results for the multi-objective setting under the uniform constraint to the setting where a feasible solution has to be a spanning tree. The crucial observation from \cite{DBLP:journals/eor/Neumann07} to obtain the extreme points is that edge exchanges resulting in new spanning trees allow to construct solutions on the linear segments between two consecutive extreme points in the same way as in the case of the uniform constraint. 
Let $\ell \leq m(m-1)/2$ be the pairs of edges $e_i$ and $e_j$ with $\sigma_i^2 < \sigma_j^2$ and $\mu_i > \mu_j$.
Similar to Lemma~\ref{lem:number-expoints} and using the arguments in \cite{DBLP:journals/dam/IshiiSNN81}, the number of extreme points is at most $\ell+2 \leq m^2$ as an optimal solution for $f_{\lambda}= \lambda \mu(x) + (1-\lambda) v(x)$ can be obtained by Kruskal's greedy algorithm.
We replace the expected time of $O(n^2)$ for an items exchange in the case of the uniform constraint with the expected waiting time of $O(m^2)$ for a specific edge exchange in the case of the multi-objective spanning tree formulation and get the following results.

\begin{theorem}
\label{theo:gsemo-optimal-mst}
Considering the chance constrained minimum spanning tree problem, the expected time until \gsemo has computed a population which includes an optimal solution for any choice of $\alpha \in [1/2,1[$ is $O(P_{\max} m^2 \ell (\log n + \log v_{\max}))$.
\end{theorem}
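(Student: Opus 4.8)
The plan is to mirror the proof of Theorem~\ref{theo:gsemo-optimal-unif} almost line for line, replacing the cardinality-constrained exchange of single items by the edge exchanges that the graphic matroid of $G$ permits, and replacing every waiting time of $O(n^2)$ for a specific 2-bit flip by a waiting time of $O(m^2)$, since mutation now acts on $m$ edge variables. First I would establish the analogue of Lemma~\ref{lem:MO-minvariance}: that \gsemo inserts a Pareto optimal spanning tree of minimal variance $v_{\min}$ into the population in expected time $O(P_{\max}m^2(\log n + \log v_{\max}))$. Because every disconnected graph incurs the penalty $(c(x)-1)\cdot w_{ub}$ with $w_{ub}=n^2\cdot\max\{\mu_{\max},v_{\max}\}$ in both objectives, any connected solution dominates every disconnected one, so the number of connected components is driven down to $1$ first; thereafter, tracking the population member of smallest variance and applying multiplicative drift to $v(x)-v_{\min}$ (a variance-reducing edge exchange is always accepted unless minimal variance is reached, by the argument of \cite{DBLP:journals/eor/Neumann07}) yields the bound. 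Moreover, since by Lemma~\ref{lem:optgD} and the arguments of \cite{DBLP:journals/dam/IshiiSNN81} every $\alpha$-optimal tree is an $f_{\lambda}$-optimum for some $\lambda$, producing all extreme points suffices to cover every $\alpha\in[1/2,1[$.

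For the second phase I would reuse the quantities $v_{\lambda}^{\max}$ and $v_{\lambda}^{\min}$ from the proof of Theorem~\ref{theo:gsemo-optimal-unif} together with the consecutive thresholds $\lambda_0,\dots,\lambda_{\ell+1}$ of Lemma~\ref{lem:number-expoints}, now with $\ell\le m(m-1)/2$, obtained from the pairs of edges $e_i,e_j$ with $\sigma_i^2<\sigma_j^2$ and $\mu_i>\mu_j$. The induction is identical: the minimal-variance spanning tree is the maximal-variance optimum for $\lambda_0=0$, and since two consecutive extreme points share an optimal solution we have $v_{\lambda_i}^{\max}=v_{\lambda_{i+1}}^{\min}$. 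Assuming the population already contains the solution $x$ optimal for $f_{\lambda_i}$ with maximal variance $v_{\lambda_i}^{\max}$, this $x$ is also optimal for $f_{\lambda_{i+1}}$, and I would drive $v(x)$ up to the target $v_{\lambda_{i+1}}^{\max}$.

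The hard part, which \cite{DBLP:journals/eor/Neumann07} supplies and which I would invoke, is to show that the traversal between two consecutive extreme points stays inside the set of $f_{\lambda_{i+1}}$-optimal spanning trees. This rests on the strong basis-exchange property of the graphic matroid: if $x$ (variance $v_{\lambda_{i+1}}^{\min}$) and the target $y$ (variance $v_{\lambda_{i+1}}^{\max}$) are both optimal spanning trees for $f_{\lambda_{i+1}}$ differing in $r$ edges, there is a bijection pairing each $e\in x\setminus y$ with an $e'\in y\setminus x$ such that $x-e+e'$ is again a spanning tree; because $e$ and $e'$ are tied under $f_{\lambda_{i+1}}$ (their order swaps exactly at the threshold $\lambda_{i+1}$), the offspring $x-e+e'$ is itself $f_{\lambda_{i+1}}$-optimal, of strictly larger variance and smaller expected weight, hence a new non-dominated point accepted by \gsemo. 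The per-swap variance increments sum to $v(y)-v(x)=d(x)$, so the accepted variance-increasing single-edge exchanges carry total variance gain at least the distance $d(x)=v_{\lambda_{i+1}}^{\max}-v(x)\le v_{\lambda_{i+1}}^{\max}-v_{\lambda_i}^{\max}$; each such specific 2-bit flip has probability $\Omega(1/m^2)$, and selecting the right individual from the population costs a factor $P_{\max}$.

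Finally I would apply the multiplicative drift theorem~\cite{DoerrJohannsenWinzenALGO12} to $d(x)$ exactly as in Theorem~\ref{theo:gsemo-optimal-unif}: the expected time to bridge the segment $[\lambda_i,\lambda_{i+1}]$ is $O\!\left(P_{\max}m^2\log(v_{\lambda_{i+1}}^{\max}-v_{\lambda_i}^{\max})\right)=O(P_{\max}m^2(\log n+\log v_{\max}))$, using $m=O(n^2)$ and the crude bound $v_{\lambda_{i+1}}^{\max}\le m\cdot v_{\max}$. Summing over the at most $\ell+1$ segments yields the claimed $O(P_{\max}m^2\ell(\log n+\log v_{\max}))$. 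The only genuinely new ingredient compared with the uniform case is the matroid exchange argument guaranteeing that the traversal stays within the set of optimal spanning trees; everything else is a substitution of $m$ for $n$ and of ``edge exchange'' for ``item exchange.''
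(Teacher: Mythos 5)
Your proposal is correct and follows essentially the same route as the paper: the paper's argument is precisely to transfer the proof of Theorem~\ref{theo:gsemo-optimal-unif} (minimal-variance solution first, then a walk along the extreme points indexed by the thresholds in $\Lambda$ with $\ell\le m(m-1)/2$, each segment bridged by multiplicative drift), replacing item exchanges by the spanning-tree edge exchanges from \cite{DBLP:journals/eor/Neumann07} and the $O(n^2)$ waiting time by $O(m^2)$. Your write-up is in fact more detailed than the paper's, which states the matroid-exchange step only as a ``crucial observation'' cited from \cite{DBLP:journals/eor/Neumann07} rather than spelling out the basis-exchange bijection as you do.
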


\section{Convex Multi-Objective Evolutionary Algorithms}
\label{sec:convex-MOEA}
The results presented in the previous section depended on $P_{\max}$, which might be exponential with respect to the given input in the case that the expected values and variances are exponential in $n$. We now introduce approaches which provably can obtain the set of optimal solutions
in expected polynomial time.
\begin{algorithm}[t]
 Given a set of search points $P$ and a bi-objective objective function $f=(f_1, f_2)$\;
  Set $r:=1$, $S:=P$\;
     \Repeat{$S=\emptyset$}{
    Let $x$ and $y$ be solutions that are minimal in $S$ with respect to the lexicographic order $(f_1,f_2)$ and $(f_2, f_1)$, respectively.\;
     Compute the convex hull $C(S)$ between $x$ and $y$.\;
     Set $r_P(x) = r$ for all $x \in C(S)$\;
     $S:= S \setminus C(S)$.\;
     $r:=r+1$\;
      }
 \caption{Convex Hull Ranking Algorithm~\citep{DBLP:journals/scira/MonfaredM011} \label{alg:CR}}
\end{algorithm}

As the extreme points are of significant importance when computing optimal solutions for the chance constrained problems studied in this paper, we consider a convex evolutionary algorithm approach similar to the convex approach examined in \cite{DBLP:journals/scira/MonfaredM011}. It makes use of the convex hull ranking algorithm shown in Algorithm~\ref{alg:CR}, which has originally been introduced in \cite{DBLP:journals/scira/MonfaredM011}. Our Convex $(\mu+1)$-EA shown in Algorithm~\ref{alg:CV-EA} uses the convex hull ranking algorithm to assign ranks to the search points in a given population and use this for survival selection. 
Algorithm~\ref{alg:CR} assigns ranks to individuals by iterative computing the convex hull of a population in the objective space and assigning ranks to the individuals when they become part of the convex hull of the remaining set of elements.

For a given set of elements $S$, Algorithm~\ref{alg:CR} selects the element $x$ that is minimal w.\,r.\,t.\ the lexicographic order of $(f_1, f_2)$ and $y$ which is minimal with respect to the lexicographic order of $(f_2, f_1)$. Note that $x$ and $y$ are not strictly dominated by any solution in $S$ and are solutions of minimal value with respect to $f_1$ and $f_2$, respectively. Let $C(S)$ be the extreme points of $S$ which defines the convex hull of $S$.
As $x$ and $y$ are such extreme points, they are part of the convex hull points $C(S)$ of the given set $S$ in the objective space. The convex ranking algorithm initially sets the rank value~$r$ to~$1$. It then adds $x$, $y$, and all other points on the convex hull to $C(S)$. Note that this step can be done in time $O(|S| \log |S|)$ by using the Graham scan algorithm. 
It assigns to all points in the $C(S)$ the same rank $r$, and removes $C(S)$ from $S$ afterwards. It then proceeds by increasing the rank $r$ by $1$ and repeating the procedure until the set $S$ is empty.

\begin{algorithm}[t]
 Choose a population $P$ consisting of $\mu$ individuals from $\{0,1\}^n$ uniformly at random\;
 $P\leftarrow \{x\}$\;
\Repeat{$\mathit{stop}$}{
Choose $x\in P$ uniformly at random\;
Create $y$ by flipping each bit $x_{i}$ of $x$ with probability $\frac{1}{n}$\;
$P:=P \cup \{y\}$\;
Remove an individual $z$ from $P$ with $z=\arg \max_{x \in P} (r_P(x), v(x))$
    }
\caption{Convex ($\mu+1$)-EA} \label{alg:CV-EA}
\end{algorithm}

\begin{algorithm}[t]
 $P\leftarrow \{x\}$\;
\Repeat{$\mathit{stop}$}{
Choose $x\in P$ uniformly at random\;
Create $y$ by flipping each bit $x_{i}$ of $x$ with probability $\frac{1}{n}$\;
\If{$y \in C(P \cup \{y\})$}{
$P \leftarrow (P \setminus \{z\in P \mid y \preceq z\}) \cup \{y\}$\;
$P \leftarrow P \setminus (P \setminus C(P))$\;
 \If{$|P|> P_{ub}$}{
 $z = \arg \max_{x \in P} v(x)$\;
 $P \leftarrow P \setminus \{z\}$\;
}
}
}
\caption{Convex \gsemo (with upper bound $P_{ub}$ on population size)} \label{alg:CO-GSEMO}
\end{algorithm}

We denote by $r_P(x)$ the rank of solution~$x$ with respect to population $P$ using the convex hull ranking algorithm. 
When removing an element $z$ from $P$ in Algorithm~\ref{alg:CV-EA}, then this element is chosen according to the lexicographic order with respect to $(r_P(x), v(x))$, i.e.\ an element with the highest variance from the front having the largest rank is removed.

Let $\lambda^* \in [0,1]$ and $X$ be an arbitrary set of search points. Then the element $x^* \in X^*_{\lambda}$ with $x^* = \arg \max_{x \in X^*} v(x)$ is an extreme point of the convex hull of $X$ according to Definition~\ref{def:extremepoint}.
We consider the different values of $\lambda \in \Lambda$, where extreme points can change based on the order of pairs of edges. The following lemma shows that extreme points that are obtained with increasing 
$\lambda$ stay in the population.  

\begin{lemma}
Assume $\mu \geq \ell+2$ and that $P$ includes the extreme points for $\lambda_0, \ldots, \lambda_q \in \Lambda$. Then the algorithm does not remove any of these extreme points during the run of the algorithm.
\end{lemma}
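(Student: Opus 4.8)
The plan is to prove, by inspecting a single overflow step of the Convex $(\mu+1)$-EA, that none of the protected extreme points $e^*_0,\dots,e^*_q$ (the extreme points for $\lambda_0,\dots,\lambda_q$) is ever the individual $z$ deleted in Algorithm~\ref{alg:CV-EA}. Write $v_i=v(e^*_i)$. Since larger $\lambda$ weights $\mu(x)$ more heavily and hence yields $f_\lambda$-optimal solutions of larger variance, we have $v_0<v_1<\dots<v_q$, and because $\lambda_0<\dots<\lambda_q$ are the $q+1$ smallest thresholds in $\Lambda$, the points $e^*_0,\dots,e^*_q$ are \emph{consecutive} vertices of the lower-left convex hull of $f(2^E)$, counted from the minimum-variance end.

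First I would show that every $e^*_i$ has rank $1$ in any population containing it. As $e^*_i$ is an extreme point of $f(2^E)$ in the sense of Definition~\ref{def:extremepoint}, the vector $f(e^*_i)$ is a vertex of $\mathrm{conv}(f(2^E))$ and cannot be written as a convex combination of other points of $f(2^E)$; a fortiori it is a vertex of $\mathrm{conv}(f(P))$ for every $P\subseteq 2^E$ with $e^*_i\in P$, so it lies on the convex Pareto front computed by Algorithm~\ref{alg:CR} and receives $r_P(e^*_i)=1$. In particular this also holds for $P':=P\cup\{y\}$, so adding the offspring $y$ never demotes a protected point. Since the removal rule deletes an element of \emph{maximal} rank, with ties broken by maximal variance, a protected point can be deleted only in a step in which \emph{every} member of $P'$ has rank $1$.

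It therefore suffices to analyse that case. Because $v_0<\dots<v_q$, the only protected point that can carry the largest variance is $e^*_q$, so $e^*_q$ is deleted only if every point of $P'$ has variance at most $v_q$. The heart of the argument is to rule this out by a counting contradiction using $\mu\ge\ell+2$, hence $|P'|\ge\ell+3$. I claim that under these hypotheses every rank-$1$ point of variance at most $v_q$ is one of $e^*_0,\dots,e^*_q$. Indeed, let $w\in P'$ with $v(w)\le v_q$ and $w\notin\{e^*_0,\dots,e^*_q\}$. Since $e^*_0,\dots,e^*_q$ are adjacent hull vertices, $v(w)$ lies in some interval $[v_j,v_{j+1}]$ with $j+1\le q$, and over this interval the lower hull of $f(2^E)$ is exactly the edge $\overline{f(e^*_j)\,f(e^*_{j+1})}$; as $f(w)\in f(2^E)$ lies on or above this edge while both endpoints $e^*_j,e^*_{j+1}$ are present in $P'$, the point $f(w)$ is a convex combination of (or lies above the segment spanned by) two members of $P'$ and is therefore not a vertex of $\mathrm{conv}(f(P'))$, giving $r_{P'}(w)\ge 2$. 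Consequently at most $q+1\le\ell+2$ members of $P'$ can be rank-$1$ points of variance at most $v_q$, contradicting $|P'|\ge\ell+3$. Hence in any overflow step either some element has rank $\ge 2$ (and is removed instead of a protected point) or the maximal-variance element has variance exceeding $v_q$ (and is thus not protected), so no $e^*_i$ is removed.

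I expect the geometric claim of the third paragraph to be the main obstacle, namely that the only rank-$1$ points of variance at most $v_q$ are the protected extreme points themselves. Its validity rests on two facts I would state carefully: that $e^*_0,\dots,e^*_q$ are \emph{adjacent} vertices of $\mathrm{conv}(f(2^E))$ from the low-variance end (so that no additional true extreme point hides inside an interval $[v_j,v_{j+1}]$ and the corresponding hull boundary is a single straight edge), and that both endpoints of the relevant edge are simultaneously in $P'$, which is precisely what the invariant $\{e^*_0,\dots,e^*_q\}\subseteq P$ supplies. Combined with Lemma~\ref{lem:number-expoints} bounding the number of extreme points by $\ell+2$, this closes the counting argument and yields the claim for the entire run of the algorithm.
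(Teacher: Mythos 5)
Your proposal is correct and follows essentially the same approach as the paper's (very terse) proof: extreme points always have rank $1$, every rank-$1$ point with variance at most $v_{\lambda_q}^{\max}$ is itself one of the at most $q+1\leq\ell+2\leq\mu$ protected extreme points, and the lexicographic removal rule $(r_P(x),v(x))$ therefore always finds either a rank-$\geq 2$ individual or a rank-$1$ individual of larger variance to delete. Your write-up simply makes explicit the geometric and counting steps that the paper compresses into three sentences.
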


\begin{proof}
The extreme points all have rank $1$ as otherwise they will not be extreme points for the given problem. Furthermore, all solutions of rank $1$ with variance $v(x) \leq v_{\lambda_q}^{\max}$ are extreme points and we have $q+1 \leq \ell+2 \leq \mu$ of them. This implies that the algorithm does not remove any of these extreme points in the survival selection step.
\end{proof}

As the Convex $(\mu+1)$-EA does not remove the extreme points and is able to construct extreme points in the same way as \gsemo, it can produce the extreme points for values $\lambda \in \Lambda$ iteratively for increasing values of $\lambda$. 
We therefore get the following result for the case of a uniform constraint.

\begin{theorem}
\label{thm:convex-uniform}
Let $n^2 \geq \mu \geq \ell+2$ and consider the case of a uniform constraint. Then the expected time until the Convex $(\mu+1)$-EA has computed a population which includes an optimal solution for any choice of $\alpha \in [1/2,1[$ is $O(\mu n^2 \ell (\log n + \log v_{\max}))=O(n^4 \ell (\log n + \log v_{\max}))$.
\end{theorem}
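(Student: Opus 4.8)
The plan is to follow the structure of the proof of Theorem~\ref{theo:gsemo-optimal-unif} for \gsemo, replacing the factor $P_{\max}$ by the fixed population bound $\mu$ and invoking the preceding lemma to guarantee that already-constructed extreme points stay in the population. I would split the run into two phases: first reaching the extreme point of minimal variance (the optimum for $f_0 = f_{\lambda_0}$), and then constructing the extreme points for $\lambda_1,\dots,\lambda_{\ell+1}\in\Lambda$ one at a time in increasing order of $\lambda$. Since by Lemma~\ref{lem:number-expoints} there are at most $\ell+2\le n^2$ extreme points and they cover all values of $\lambda$ where the greedy optimum changes, producing all of them yields a population containing an optimal solution for every $\alpha\in[1/2,1[$.

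For the first phase I would argue as in Lemma~\ref{lem:MO-minvariance}. Because infeasible search points are penalised in both objectives, feasibility is reached quickly, and afterwards I track the search point of smallest variance in the current population. While it has more than $k$ elements a single accepted bit flip reduces its variance, and once it has exactly $k$ elements each of the $s$ element swaps toward a minimum-variance $k$-set is an improving $2$-bit flip; selecting the right parent costs a factor $1/\mu$ instead of $1/P_{\max}$. Multiplicative drift on the variance then gives expected time $O(\mu n^2(\log n+\log v_{\max}))$ until $E_0$ enters the population. Crucially, the minimum-variance solution is never the individual maximising $(r_P(x),v(x))$, so the convex-hull survival selection never discards it.

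For the second phase I proceed by induction: assuming $P$ contains the extreme points $E_0,\dots,E_i$ for $\lambda_0,\dots,\lambda_i$, the preceding lemma guarantees (using $\mu\ge\ell+2$) that none of them is ever removed. As in the \gsemo argument, $E_i$ is simultaneously optimal for $f_{\lambda_{i+1}}$ and has the minimal variance $v_{\lambda_{i+1}}^{\min}=v_{\lambda_i}^{\max}$ among such optima. Starting from $E_i$, I use that if the current $f_{\lambda_{i+1}}$-optimal solution of largest variance differs from the target $E_{i+1}$ in $r$ elements, then there are $r^2$ variance-increasing $2$-bit swaps that keep the solution $f_{\lambda_{i+1}}$-optimal, so multiplicative drift on $v_{\lambda_{i+1}}^{\max}-v(\cdot)$ gives expected time $O(\mu n^2(\log n+\log v_{\max}))$ to reach $E_{i+1}$. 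Summing over the $\ell+1$ transitions yields $O(\mu n^2\ell(\log n+\log v_{\max}))$, and substituting $\mu\le n^2$ gives the stated $O(n^4\ell(\log n+\log v_{\max}))$.

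The main obstacle, and the only genuine difference from the \gsemo analysis, is to show that the intermediate working solutions driving the drift toward $E_{i+1}$ survive the convex-hull-based removal, since their variance exceeds the protected threshold $v_{\lambda_i}^{\max}$ while the removal rule evicts the highest-variance individual of worst rank. Here I would invoke the observation stated just before the preceding lemma: the maximum-variance $f_{\lambda_{i+1}}$-optimal solution in $P$ is an extreme point of the convex hull of $f(P)$ and hence has rank $1$. Because the genuine extreme points number at most $\ell+2\le\mu$, a population of size $\mu$ always retains a non-hull individual of rank at least $2$ that is evicted in preference to any rank-$1$ point; moreover, an improving offspring pushes the previous working solution into the interior of a hull edge (rank $\ge 2$), so it is the one discarded while the improved, higher-variance solution is kept. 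Establishing this monotone, protected progress rigorously is the delicate step, and it is precisely what lets the multiplicative-drift bound carry over from \gsemo with $\mu$ in place of $P_{\max}$.
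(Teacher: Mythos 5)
Your proposal matches the paper's approach exactly: the paper gives no separate proof of this theorem, stating only that it is analogous to Theorem~\ref{theo:gsemo-optimal-unif} with $P_{\max}$ replaced by $\mu$ and with the preceding lemma guaranteeing that already-constructed extreme points survive the convex-hull-based survival selection. Your closing discussion of why the intermediate working solutions driving the drift are not evicted is in fact more careful than anything the paper provides, which leaves that point entirely implicit.
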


Similarly, we get the following polynomial upper bound when considering the multi-objective formulation for the chance constrained minimum spanning tree problem. 

\begin{theorem}
\label{thm:convex-spanning}
Let $m^2 \geq \mu \geq \ell+2$ and consider the chance constrained minimum spanning tree problem. Then the expected time until the Convex $(\mu+1)$-EA has computed a population which includes an optimal solution for any choice of $\alpha \in [1/2,1[$ is $O(\mu m^2 \ell (\log n + \log v_{\max}))=O(m^4 \ell (\log n + \log v_{\max}))$.
\end{theorem}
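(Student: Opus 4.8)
The plan is to mirror the proof of Theorem~\ref{thm:convex-uniform} for the uniform constraint, replacing the item-swap arguments by the edge-exchange arguments used for the spanning-tree formulation in Theorem~\ref{theo:gsemo-optimal-mst} and in \cite{DBLP:journals/eor/Neumann07}. Concretely, the overall bound will arise by substituting the $O(m^2)$ expected waiting time for a specific edge exchange in place of the $O(n^2)$ waiting time for an item swap, by summing over the at most $\ell$ relevant values of $\Lambda$, and by using that the Convex $(\mu+1)$-EA keeps its population size fixed at $\mu \le m^2$, so that $P_{\max}=\mu$.

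First I would argue that the convex-hull ranking selection preserves the extreme points already constructed. Since $\mu \ge \ell+2$, the lemma preceding Theorem~\ref{thm:convex-uniform} applies unchanged in the spanning-tree setting: every extreme point has rank~$1$, every rank-$1$ solution with variance at most $v_{\lambda_q}^{\max}$ is itself an extreme point, and there are at most $\ell+2 \le \mu$ of them, so survival selection never discards an extreme point once inserted. This legitimises constructing the extreme points iteratively for increasing $\lambda \in \Lambda$. Next I would establish the base case, namely that the population reaches the extreme point of minimal variance, i.e.\ a connected graph of minimum total variance. Here I would run the drift argument of Lemma~\ref{lem:MO-minvariance} on the smallest-variance individual of $P$, but with edge exchanges: as long as this individual is not variance-optimal, there is an accepted edge exchange strictly decreasing its variance, and the number of improving exchanges drives a multiplicative drift on the variance gap. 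Selecting the relevant individual costs a factor $1/\mu$ and a specific edge exchange costs $\Theta(1/m^2)$, so this phase takes expected time $O(\mu m^2(\log n + \log v_{\max}))$.

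The inductive heart of the argument is moving from the extreme point optimal for $f_{\lambda_i}$ (with maximal variance among $f_{\lambda_i}$-optimal trees) to the one optimal for $f_{\lambda_{i+1}}$. The structural fact from \cite{DBLP:journals/eor/Neumann07}, analogous to Lemma~\ref{lem:number-expoints}, is that along the linear segment joining two consecutive extreme points the intermediate optimal spanning trees are reachable from one another by single edge exchanges that preserve connectivity and optimality with respect to $f_{\lambda_{i+1}}$; if the two endpoints differ in $r$ edges, there are $\Theta(r^2)$ improving $2$-bit flips, yielding multiplicative drift on $d(x)=v_{\lambda_{i+1}}^{\max}-v(x)$ and hence expected time $O(\mu m^2(\log n + \log v_{\max}))$ per segment. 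Summing over the at most $\ell$ values of $\Lambda$ and substituting $\mu \le m^2$ gives $O(\mu m^2 \ell(\log n + \log v_{\max})) = O(m^4 \ell(\log n + \log v_{\max}))$, as claimed.

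The main obstacle I expect is justifying the edge-exchange drift rigorously inside the convex selection mechanism rather than under plain Pareto dominance. Unlike \gsemo, the Convex $(\mu+1)$-EA prunes to the convex hull and caps the size at $\mu$, so I must verify that throughout each phase the relevant ``current best'' extreme point is present and selectable with probability $\Omega(1/\mu)$, and that the variance-reducing edge exchanges are genuinely accepted, i.e.\ enter the convex hull, rather than being discarded by the convex-hull ranking. Confirming that the $\Theta(r^2)$ improving exchanges of \cite{DBLP:journals/eor/Neumann07} remain valid when the acceptance criterion is convex-hull membership is the delicate point, and it is exactly where the preserved-extreme-points lemma together with the rank-$1$ characterisation of extreme points must be used to rule out accidental removal.
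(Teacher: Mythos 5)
Your proposal follows essentially the same route as the paper, which itself only states that the proof is analogous to Theorem~\ref{theo:gsemo-optimal-unif} and Theorem~\ref{theo:gsemo-optimal-mst}: the preceding lemma guarantees that extreme points are never removed once $\mu \geq \ell+2$, the $O(n^2)$ item-swap waiting time is replaced by the $O(m^2)$ edge-exchange waiting time, and $P_{\max}$ is replaced by the fixed population size $\mu$. Your additional care about verifying acceptance under the convex-hull criterion is a reasonable refinement of the same argument, not a different approach.
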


The proofs of the previous two theorems are analogous to  Theorem~\ref{theo:gsemo-optimal-unif} and 
 Theorem~\ref{theo:gsemo-optimal-mst}, respectively.
Note that the polynomial runtime bound for the Convex ($\mu+1$)-EA and the  computation of the extreme points hold in general for bi-objective spanning trees. It therefore improves upon the pseudo-polynomial runtime given in~\cite{DBLP:journals/eor/Neumann07} to achieve a 2-approximation for bi-objective minimum spanning trees using evolutionary algorithms.

The idea of using only solutions on the convex hull can directly be incorporated into \gsemo. Instead of maintaining a set of solutions that represents all trade-offs with respect to the given objectives functions, Convex \gsemo (see Algorithm~\ref{alg:CO-GSEMO}) maintains a set of solutions that represents the convex hull of the objective vectors found so far during the run of the algorithm. This is done by by accepting an offspring $y$ only if it is part of the convex hull of $P \cup \{y\}$. If $y$ belongs to the convex hull then $y$ is added to the population $P$ and all solutions that are weakly dominated by $y$ or not on the convex hull of the resulting population $P \cup \{y\}$ are removed. Furthermore Convex \gsemo uses a parameter $P_{ub}$ which gives an upper bound on the population size. If the algorithm exceeds $P_{ub}$ then the individual $z$ on the convex hull with the largest variance $v(z)$-value is removed. Setting $P_{ub}\geq \ell+2$ ensures that all points on the convex hull can be included. 
Adapting the proofs of Theorem~\ref{thm:convex-uniform} and \ref{thm:convex-spanning} leads to the following corollaries which show that Convex \gsemo has a polynomial expected time to compute for each $\alpha \in [1/2,1[$ and optimal solution.

\begin{corollary}
Let $n^2 \geq P_{ub}\geq \ell+2$ and consider the case of a uniform constraint. Then the expected time until Convex \gsemo has computed a population which includes an optimal solution for any choice of $\alpha \in [1/2,1[$ is $O(P_{ub}n^2 \ell (\log n + \log v_{\max}))=O(n^4 \ell (\log n + \log v_{\max}))$.
\end{corollary}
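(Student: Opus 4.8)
The plan is to mirror the proof of Theorem~\ref{theo:gsemo-optimal-unif} almost verbatim, replacing the population bound $P_{\max}$ by the explicit cap $P_{ub}$ and replacing the plain non-domination acceptance of \gsemo by the convex-hull acceptance of Convex \gsemo (Algorithm~\ref{alg:CO-GSEMO}). I would split the run into two phases: first reaching a Pareto optimal solution of minimal variance, i.e.\ the extreme point for $\lambda_0 = 0$, and then sweeping through the ordered threshold set $\Lambda = \{\lambda_0, \ldots, \lambda_{\ell+1}\}$ of Lemma~\ref{lem:number-expoints}, constructing the maximal-variance extreme point for each $\lambda_{i+1}$ out of the one already stored for $\lambda_i$.

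For the first phase I would reuse Lemma~\ref{lem:MO-minvariance}: the argument there only selects, in each step, the current smallest-variance search point and drives its variance down via accepted element removals or swaps. Every such offspring is $f_0$-optimal so far and hence lies on the lower-left convex hull, so it passes the test $y \in C(P \cup \{y\})$; since $|P| \le P_{ub}$ throughout, the multiplicative-drift bound of \cite{DoerrJohannsenWinzenALGO12} becomes $O(P_{ub} n^2 (\log n + \log v_{\max}))$.

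The core of the proof is the preservation-and-construction step, the analog of the lemma preceding Theorem~\ref{thm:convex-uniform}. The at most $\ell + 2$ global extreme points are exactly the vertices of the lower-left convex hull of $f(2^E)$ in the sense of Definition~\ref{def:extremepoint}, so each has convex-hull rank $1$ and is retained by the convex-hull filter. Because $P_{ub} \ge \ell + 2$, the size-cap removal that deletes the maximal-variance hull point is never forced to discard such a vertex: throughout the sweep the population consists of the already-found extreme points together with a single current working vertex, i.e.\ at most $\ell+2 \le P_{ub}$ solutions. I would then run the drift construction exactly as in Theorem~\ref{theo:gsemo-optimal-unif}: the maximal-variance optimum for $f_{\lambda_i}$ is simultaneously the minimal-variance optimum for $f_{\lambda_{i+1}}$, and there are $r^2$ improving $2$-bit swaps (with $r$ the symmetric difference to the target) that keep the solution $f_{\lambda_{i+1}}$-optimal while raising its variance toward $v_{\lambda_{i+1}}^{\max}$. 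Multiplicative drift on $d(x) = v_{\lambda_{i+1}}^{\max} - v(x)$ yields $O(P_{ub} n^2 (\log n + \log v_{\max}))$ per transition and $O(P_{ub} n^2 \ell (\log n + \log v_{\max}))$ after summing over all $\ell$ transitions; substituting $P_{ub} \le n^2$ gives the claimed bound.

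The step I expect to be the main obstacle is justifying that the intermediate drift solutions do not inflate the population and, in particular, that the maximal-variance removal rule never evicts a solution still needed for progress. The key observation to settle this is that the three relevant objective vectors --- the confirmed extreme point for $\lambda_i$, the previous working solution, and the new one --- are collinear, all lying on the supporting line of slope $-\lambda_{i+1}/(1-\lambda_{i+1})$; hence each superseded working solution becomes an interior, non-vertex hull point and is removed by the convex-hull filter rather than retained. This keeps $|P| \le \ell + 2$ during every transition, so both the acceptance of each improved $f_{\lambda_{i+1}}$-optimal offspring and the retention of all previously built extreme points hold, and the drift analysis carries over unchanged from the $P_{\max}$-bounded \gsemo to the $P_{ub}$-bounded Convex \gsemo.
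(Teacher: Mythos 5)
Your proposal is correct and follows essentially the same route as the paper, which derives this corollary by adapting the proofs of Theorem~\ref{thm:convex-uniform} and hence of Theorem~\ref{theo:gsemo-optimal-unif}: phase one via Lemma~\ref{lem:MO-minvariance} with the population bound $P_{ub}$ in place of $P_{\max}$, then the sweep over $\Lambda$ using the $r^2$ tied $2$-bit swaps and multiplicative drift, with the convex-hull acceptance and the cap $P_{ub}\geq \ell+2$ guaranteeing that confirmed extreme points are retained. Your collinearity observation for why superseded intermediate working solutions are pruned is a detail the paper leaves implicit, but it is consistent with its preservation lemma and does not constitute a different approach.
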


\begin{corollary}
Let $m^2 \geq P_{ub}\geq \ell+2$ and and consider the chance constrained minimum spanning tree problem. Then the expected time until Convex \gsemo  has computed a population which includes an optimal solution for any choice of $\alpha \in [1/2,1[$ is $O(P_{ub}m^2 \ell (\log n + \log v_{\max}))=O(m^4 \ell (\log n + \log v_{\max}))$.
\end{corollary}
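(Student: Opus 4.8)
The plan is to mirror the proof of Theorem~\ref{thm:convex-spanning} for the Convex $(\mu+1)$-EA, with the bound $P_{ub}$ taking over the role that the fixed population size $\mu$ played there. The argument rests on two pillars that I would establish in turn. First, I would show that once Convex \gsemo (Algorithm~\ref{alg:CO-GSEMO}) has discovered the extreme points corresponding to $\lambda_0,\ldots,\lambda_q\in\Lambda$, it never deletes them. Second, I would show that, starting from a Pareto optimal spanning tree of minimal variance, the algorithm constructs the remaining extreme points one after another for increasing $\lambda$, each via a single accepted edge exchange whose cumulative effect is controlled by multiplicative drift on the variance. Combining the two pillars and summing over the at most $\ell$ consecutive pairs $\lambda_i,\lambda_{i+1}$ then yields the stated bound.

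For the preservation claim I would first note that, by construction, after the hull-cleanup step the population $P$ consists exactly of the convex-hull vertices of $f(P)$. A genuine extreme point $x$ of $2^E$ (Definition~\ref{def:extremepoint}) lies on the convex hull of $f(2^E)$, hence remains a vertex of $f(P)$ for every $P$ containing it, so it is never deleted by the rule that discards points off the convex hull. The only other deletion is the overflow rule, which removes the hull vertex of maximal variance. Here I would use that the whole problem has at most $\ell+2$ extreme points (Lemma~\ref{lem:number-expoints}, transferred to the spanning tree setting exactly as in Theorem~\ref{theo:gsemo-optimal-mst}), and that the already-secured extreme points for $\lambda_0,\ldots,\lambda_q$ are precisely the hull vertices of variance at most $v_{\lambda_q}^{\max}$, namely $v_{\lambda_0}^{\max}\le\cdots\le v_{\lambda_q}^{\max}$, since below $v_{\lambda_q}^{\max}$ the lower convex boundary is pinned down by exactly these $q+1\le\ell+2$ vertices. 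As overflow triggers only when $|P|=P_{ub}+1\ge\ell+3>q+1$, the population then contains a vertex of variance strictly above $v_{\lambda_q}^{\max}$, so the maximal-variance vertex that is removed is never one of the secured extreme points. This is exactly the analog of the unnumbered preservation lemma preceding Theorem~\ref{thm:convex-uniform}, now phrased for the implicit convex-hull population.

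With preservation in hand I would reuse the construction and drift analysis essentially verbatim. As in Lemma~\ref{lem:MO-minvariance} and its spanning tree version, a Pareto optimal spanning tree of minimal variance (optimal for $f_0$) enters the population in expected time $O(P_{ub}m^2(\log n+\log v_{\max}))$. Then, for each consecutive pair $\lambda_i,\lambda_{i+1}$, the extreme point $x$ already present is optimal for $f_{\lambda_{i+1}}$ and satisfies $v(x)=v_{\lambda_i}^{\max}=v_{\lambda_{i+1}}^{\min}$; the crucial observation of \cite{DBLP:journals/eor/Neumann07} that edge exchanges producing new spanning trees realize every point on the linear segment toward the next extreme point lets a single accepted edge swap increase the variance of the best $f_{\lambda_{i+1}}$-optimal solution toward $v_{\lambda_{i+1}}^{\max}$. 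Selecting the correct parent contributes a factor $1/P_{ub}$ and the specific edge swap a factor $\Theta(1/m^2)$, so the multiplicative drift theorem \cite{DoerrJohannsenWinzenALGO12} applied to the gap $v_{\lambda_{i+1}}^{\max}-v(x)\le v_{\max}$ gives $O(P_{ub}m^2(\log n+\log v_{\max}))$ per extreme point. Summing over the at most $\ell$ pairs yields $O(P_{ub}m^2\ell(\log n+\log v_{\max}))$, which is $O(m^4\ell(\log n+\log v_{\max}))$ once $P_{ub}\le m^2$.

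The step I expect to be the main obstacle is the preservation argument. Unlike the Convex $(\mu+1)$-EA, whose explicit lexicographic survival selection on $(r_P(x),v(x))$ makes the bound on rank-$1$ points immediate, Convex \gsemo manages the population only implicitly through the convex hull together with the maximal-variance overflow rule. The delicate point is to rule out that an offspring which is only transiently a hull vertex, or a Pareto optimal point that is not a genuine extreme point, can via the overflow deletion crowd out an already-secured extreme point. I would resolve this by insisting throughout that any point removed by overflow has variance at least $v_{\lambda_q}^{\max}$, and strictly larger unless the population has already shrunk to the secured vertices alone, a situation that $P_{ub}\ge\ell+2$ forbids from ever forcing overflow; the collinear intermediate points generated during climbing are automatically dropped by the off-hull cleanup step, so they neither accumulate nor interfere with the count.
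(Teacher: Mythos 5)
Your proposal follows essentially the same route as the paper, which proves this corollary simply by stating that the proofs of Theorem~\ref{thm:convex-uniform} and Theorem~\ref{thm:convex-spanning} (themselves analogues of Theorem~\ref{theo:gsemo-optimal-unif} and Theorem~\ref{theo:gsemo-optimal-mst}) adapt to Convex \gsemo with $P_{ub}$ playing the role of $\mu$. Your additional care on the preservation step --- checking that the overflow rule only ever deletes a hull vertex of variance strictly above $v_{\lambda_q}^{\max}$ and hence never a secured extreme point --- is a detail the paper leaves implicit, but it is consistent with, and a faithful elaboration of, the paper's intended argument.
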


For our experimental investigations of the convex evolutionary approach, we will consider Convex \gsemo with $P_{ub}=n^2$ instead of the Convex $(\mu+1)$-EA as this directly allows for a comparison to \gsemo and therefore the benefit and drawbacks of only keeping solutions on the convex hull.

\section{Experimental Investigations}
\label{sec:experiments}

We now present experimental results for the chance constrained version of a classical NP-hard optimization problem. We consider the minimum weight dominating set problem. Given a graph $G=(V,E)$ with weights on the nodes, the goal is to compute a set of nodes $D$ of minimal weight such that each node of the graph is dominated by $D$, i.e.\  either contained in $D$ or adjacent to a node in $D$.
Let $n=|V|$ be the number of nodes in the given graph $G=(V,E)$. To generate the benchmarks, we assign each node $u \in V$ a normal distribution $N(\mu(u), v(u))$ with expected weight $\mu(u)$ and a variance $v(u)$.  We consider for each graph values of $\alpha=1-\beta$ where $\beta \in \{0.2, 0.1, 10^{-2}, 10^{-4}, 10^{-6}, 10^{-8}, 10^{-10}, 0^{-12}, 10^{-14}, 10^{-16}\}$. 
We compare \ea, \gsemo, and Convex \gsemo on various and stochastic settings.
Motivated by our theoretical results for the case of a uniform constraint, we set $P_{ub}=n^2$ for Convex \gsemo, but note that this upper bound has never been reached in any run carried out as part of our experimental investigations.

We investigate the graphs cfat200-1, cfat200-2, ca-netscience, a-GrQC and Erdoes992, which are sparse graphs chosen from the network repository~\citep{nr} in three different stochastic settings. The graphs come from different application areas and are of different size. cfat200-1, cfat200-2 from the DIMACS benchmark set have 200 nodes each whereas ca-netscience from the collaboration network benchmark set has 379 nodes. We also investigate  larger graphs from the collaboration network benchmark set, namely ca-GrQC and Erdoes992 which have 4158 and  6100 nodes, respectively. These graphs are investigated with stochastic settings for the cost of the nodes as follows.

\begin{itemize}
    \item In the \emph{uniform random setting} each weight $\mu(u)$ is an integer chosen independently and uniformly at random in $\{n, \ldots, 2 n\}$ and the variance $v(u)$ is an integer chosen independently and uniformly at random in $\{n^2, \ldots, 2n^2\}$.
    
\item In the \emph{degree-based setting}, we set $\mu(u)= (n + \deg(u))^5/n^4$ where $\deg(u)$ is the degree of node $u$ in the given graph. The variance $v(u)$ is an integer chosen independently and uniformly at random in $\{n^2, \ldots, 2n^2\}$. 
\item In the \emph{negatively correlated setting}, each weight $\mu(u)$ is an integer chosen independently and uniformly at random in $\{0, \ldots, n^2\}$ and we set the variance $v(u)=(n^2-\mu(u))\cdot n^2$. 

\end{itemize}
The relationships between expected cost and variance are set in a way such that the expected cost and the standard deviation are of roughly the same magnitude. This allows to explore different types of settings depend on the choice of $\alpha$ in our experimental investigations.
For the degree-based setting, we amplified the difference in degrees through using a power of $5$ and $4$ in the numerator and dominator, respectively.
In the negatively correlated setting, nodes that differ in expected cost are incomparable. We use this setting to investigate how the population sizes of \gsemo and Convex \gsemo grow and impact their search behaviour.

The objective function is given by Equation~\ref{eq:sumofidentityandsquare} where we consider the chosen nodes together with their weight probability distributions. A solution is feasible if it is a dominating set. As we have $\alpha>0.5$, the expected cost is increased based on the standard deviation of the cost of a solution according to Equation~\ref{eq:sumofidentityandsquare}.
We use the fitness functions from Section~\ref{subsec:lb} for the (1+1)~EA and Section~\ref{subsec:mo}  for GSEMO and have a large penalty term as before for each node in $V$ that is not dominated in the given search point $x$.

For each graph and stochastic setting, we generated 30 instances. 
We allocate a budget of 10 million (10M) fitness evaluations to each run. The goal is here to investigate the quality of the results obtained within a reasonable time budget.
\gsemo and Convex \gsemo obtain their results for each graph and stochastic setting for all values of $\alpha$ within a single run of 10M iterations. We allow the \ea to run 10M iterations for each value of $\alpha$, which effectively allows the \ea to spend 10 times the number of fitness evaluations of \gsemo and Convex \gsemo to obtain its results. We used that advantage of the \ea to show that there is still a significant benefit of the multi-objective approaches compared to the single-objective counterpart even when the \ea is allowed to spend the same number of iterations for every value of $\alpha$ for which the multi-objective approaches compute results in a single run.

For each combination of graph, stochastic setting, and $\alpha$, we obtain $30$ results as described before. Note that the although GSEMO obtains in one run results for each value of $\alpha$, the results for a fixed graph and $\alpha$ combinations are independent of reach other.
The results for the different stochastic settings are shown in Tables~\ref{tab:uniform}, \ref{tab:degree}, an~ \ref{tab:ncor}.
We report on the mean and standard deviation of the result obtained for (1+1)~EA, GSEMO and Convex \gsemo and use the Mann-Whitney test to compute the $p$\nobreakdash-value in order to examine statistical significance.
In our tables, $p_1$ denotes the $p$-value for \ea and \gsemo, $p_2$ denotes the $p$-value for \ea and Convex \gsemo, and $p_3$ denotes the $p$-value for for \gsemo and Convex \gsemo. 
We call a result statistically significant if the $p$-value is at most $0.05$. The smallest mean cost value per row and $p$-values less then $0.05$ are marked in bold in the tables in order to easily compare perform and check significance of the results.

We now examine the results for the uniform random setting given shown in Table~\ref{tab:uniform}. For the graph cfat200-1, it can be observed that \ea is outperformed by \gsemo and Convex \gsemo. Most statistical comparisons of the multi-objective approaches to the \ea are also statistical significant. \gsemo achieves on average smaller cost values for most settings compared to \gsemo but the results are not statistically significant.
The results for cfat200-1 show a similar picture with \gsemo and Convex \gsemo again statistically outperforming the \ea for most settings of $\alpha$. The average cost values for Convex \gsemo are the lowest for all values of $\alpha$. However, the statistical comparison between \gsemo and Convex \gsemo does not show statistical significance.
For the graph ca-netscience, \gsemo and Convex \gsemo are significantly better then \ea for all settings of $\alpha$ but there is no statistical different between the two multi-objective approaches. For the large graphs ca-GrQc and Erdos992, the multi-objective approaches achieve lower average cost values than \ea. However, most results are not statistically significant. There only results that are statistically significant show an advantage of Convex \gsemo over \ea for 5 settings of $\alpha$ for the graph ca-GrQc.

Table~\ref{tab:degree} shows the results for the degree-based setting. They are similar to the ones for the uniform random setting. We are seeing lowest average cost values for Convex \gsemo for cfat200-1 and ca-netscience, and lower average cost values for \gsemo for cfat200-2. The multi-objective results are statistically significant better than the ones for the \ea for these three graphs, but there is no significant difference between the results for \gsemo and Convex \gsemo. For the larger graphs ca-GrQc and Erdos992, the lowest average cost values are obtained by Convex \gsemo in most cases. 8 out of 20 settings show a statistically significant advantage of Convex \gsemo over \ea whereas \gsemo is significantly better than \ea in 7 out of 20 settings. There is no significant difference in the results of Convex \gsemo and \gsemo for these two large graphs.

Finally, we examine the results for the negatively correlated setting given in Table~\ref{tab:ncor}. Here it can be observed that \gsemo performs worse than \ea and Convex \gsemo as the negative correlation induces significant trade-offs in the objective space which may increase the population size of \gsemo drastically. Overall, Convex \gsemo obtains the best results for cfat200-2, cfat200-2, and ca-netscience significantly outperforming \gsemo in all settings and outperforming the \ea in most of them. For ca-GrQc, \ea obtains the best average cost values and statically outperforms both \gsemo and Convex \gsemo. For Erdos992, Convex \gsemo obtains the lowest average cost values. However there is no statistical signficance among the results for this graph.

\begin{table*}
\tiny
    \centering
    \begin{tabular}{|c||c|c|c|c|c|c||c|c|c|c|c|c|} \hline 
       \multirow{3}{*}{Graph} & \multicolumn{6}{|c||}{\bfseries GSEMO} & \multicolumn{6}{|c|}{\bfseries Convex GSEMO}  \\ 
   &    \multicolumn{2}{|c|}{\bfseries uniform random} & \multicolumn{2}{|c|}{\bfseries degree-based} & \multicolumn{2}{|c||}{\bfseries neg. correlated} & \multicolumn{2}{|c|}{\bfseries uniform random} & \multicolumn{2}{|c|}{\bfseries degree-based }  & \multicolumn{2}{|c|}{\bfseries neg. correlated}  \\ 
   & Mean & Std  & Mean &  Std & Mean & Std & Mean &  Std & Mean &  Std & Mean &  Std\\ \hline
  cfat200-1 & 62 & 23 & 3 & 1& 10422 & 562 & 17 & 3& 3 & 1 & 15 & 8 \\
cfat200-2 & 28 & 13 & 4 & 1& 7864 & 161 & 11 & 2& 3 & 1 & 10 & 4 \\
ca-netscience & 51 & 16 & 13 & 3& 1251 & 177 & 16 & 2& 8 & 1 & 19 & 4 \\
ca-GrQc & 8 & 1 & 8 & 2& 29 & 7 & 6 & 1& 6 & 1 & 9 & 1 \\
Erdos992 & 5 & 1 & 4 & 1& 13 & 2 & 4 & 0& 3 & 1 & 7 & 1 \\ \hline
    \end{tabular}
    \caption{Average maximum population size and standard deviation during the 30 runs for \gsemo and Convex \gsemo in the uniform random, degree-based, and negatively correlated setting.}
    \label{tab:popsize}
\end{table*}
We report in Table~\ref{tab:popsize} on the average maximum population size and the standard deviation of the maximum population of \gsemo and Convex \gsemo among the graphs in the different stochastic settings. It can be observed that the population size of \gsemo is significantly higher than the one of Convex \gsemo for cfat200-1, cfat200-2, and ca-netscience. \gsemo obtains a very large mean population size of 10422 for cfat200-1, 7864 for cfat200-2, and 1251 for ca-netscience whereas the mean maximum population size of Convex \gsemo is $15$, $10$, and $19$ for these settings. This clearly shows the reduction in population size when using the convex approach and also explains the significantly better results for Convex \gsemo for these three graphs in Table ~\ref{tab:ncor}.  For the graphs ca-GrQc and Erdos992, the population sizes are comparably small. One explanation might be that the 10M iterations in combination with standard bit mutations and no problem specific operators are not enough to come close enough to the true Pareto front of these problems and that strict improvements for both objectives during the run of the algorithms frequently reduce the population size such that not such large number of trade-offs are produced as in the case of \gsemo for cfat200-1, cfat200-2, and ca-netscience.

\section{Conclusions}
With this paper, we provided the first analysis of evolutionary algorithms for chance constrained combinatorial optimization problems with normally distributed variables where the objective function is a linear combination of such variables, subject to a cardinality constraint or spanning tree constraint. 
For the case of uniform constraints we have shown that there are simple instances where the \ea has an exponential optimization time. Based on these insights we provided a multi-objective formulation which allows Pareto optimization approaches to compute a set of solutions containing for every possible confidence of $\alpha$ an optimal solution based on the classical \gsemo algorithm. As the population size of \gsemo might become exponential in the problem size,  we also provided improved convex evolutionary multi-objective approaches that allow to polynomially bound the population size while still achieving the same theoretical guarantees. We also showed that the results can be transferred to the chanced constrained minimum spanning tree problem.
Finally, we pointed out the effectiveness of our multi-objective approaches by investigating the chance constrained setting of the minimum weight dominating set problem. Our results show that the multi-objective approaches outperform the single objective results obtained by \ea in the examined settings. Furthermore, our investigations for the negatively correlated setting clearly show the benefit of using Convex \gsemo instead of the standard \gsemo approach.

\section*{Acknowledgments}

This work has been supported by the Australian Research Council (ARC) through grant FT200100536 
and by the Independent Research Fund Denmark 
through grant DFF-FNU  8021-00260B.

\begin{table*}[h]
\tiny
    \centering
    \begin{tabular}{|c|c||c|c||c|c|c||c|c|c||c|} \hline 
       \multirow{2}{*}{Graph} & \multirow{2}{*}{$\beta$} &  \multicolumn{2}{|c||}{\bfseries (1+1)EA } & \multicolumn{3}{|c||}{\bfseries GSEMO} & \multicolumn{4}{|c|}{\bfseries Convex GSEMO}  \\ 
  & &     Mean & Std & Mean & Std & $p_1$ & Mean & Std & $p_2$ & $p_3$  \\ \hline
\multirow{10}{*}{cfat200-1} & 0.2 & 3699 & 138 & 3627 & 93 & \textbf{0.046} & \textbf{3622} & 75 & \textbf{0.026} & 0.953\\
& 0.1 & 4076 & 121 & 4003 & 100 & \textbf{0.017} & \textbf{4000} & 79 & \textbf{0.011} & 0.935\\
& 0.01 & 4963 & 149 & \textbf{4885} & 119 & 0.059 & \textbf{4885} & 94 & 0.056 & 0.882\\
& 1.0E-4 & 6082 & 170 & \textbf{6038} & 139 & 0.408 & 6041 & 113 & 0.751 & 0.871\\
& 1.0E-6 & 7004 & 185 & \textbf{6879} & 149 & \textbf{0.012} & 6884 & 129 & \textbf{0.005} & 0.859\\
& 1.0E-8 & 7700 & 201 & \textbf{7571} & 159 & \textbf{0.009} & 7578 & 143 & \textbf{0.012} & 0.802\\
& 1.0E-10 & 8285 & 226 & \textbf{8171} & 166 & \textbf{0.049} & 8178 & 153 & 0.064 & 0.745\\
& 1.0E-12 & 8840 & 257 & \textbf{8708} & 173 & \textbf{0.044} & 8716 & 161 & \textbf{0.041} & 0.790\\
& 1.0E-14 & 9320 & 205 & \textbf{9198} & 179 & \textbf{0.031} & 9206 & 167 & \textbf{0.019} & 0.802\\
& 1.0E-16 & 9818 & 208 & \textbf{9641} & 184 & \textbf{0.002} & 9649 & 172 & \textbf{0.003} & 0.824\\ \hline

\multirow{10}{*}{cfat200-2} & 0.2 & 1876 & 104 & 1781 & 36 & \textbf{0.000} & \textbf{1780} & 29 & \textbf{0.000} & 0.900\\
& 0.1 & 2097 & 104 & 2031 & 40 & \textbf{0.016} & \textbf{2029} & 33 & \textbf{0.014} & 0.912\\
& 0.01 & 2707 & 140 & 2612 & 54 & \textbf{0.012} & \textbf{2609} & 48 & \textbf{0.009} & 0.982\\
& 1.0E-4 & 3468 & 160 & 3366 & 71 & \textbf{0.024} & \textbf{3362} & 67 & \textbf{0.018} & 0.865\\
& 1.0E-6 & 3996 & 217 & 3920 & 85 & 0.918 & \textbf{3911} & 76 & 0.779 & 0.652\\
& 1.0E-8 & 4515 & 208 & 4377 & 94 & \textbf{0.015} & \textbf{4362} & 81 & \textbf{0.008} & 0.589\\
& 1.0E-10 & 4894 & 206 & 4773 & 101 & \textbf{0.047} & \textbf{4755} & 86 & \textbf{0.014} & 0.464\\
& 1.0E-12 & 5194 & 216 & 5128 & 107 & 0.540 & \textbf{5108} & 90 & 0.228 & 0.420\\
& 1.0E-14 & 5561 & 253 & 5453 & 113 & 0.268 & \textbf{5429} & 93 & 0.095 & 0.387\\
& 1.0E-16 & 5854 & 269 & 5746 & 118 & 0.333 & \textbf{5721} & 96 & 0.171 & 0.379\\ \hline

\multirow{10}{*}{ca-netscience} & 0.2 & 33937 & 1485 & 32729 & 970 & \textbf{0.002} & \textbf{32694} & 1044 & \textbf{0.001} & 0.584\\
& 0.1 & 35679 & 1377 & 34252 & 975 & \textbf{0.000} & \textbf{34221} & 1057 & \textbf{0.000} & 0.589\\
& 0.01 & 39295 & 1526 & 37865 & 989 & \textbf{0.000} & \textbf{37843} & 1090 & \textbf{0.000} & 0.652\\
& 1.0E-4 & 44581 & 1853 & 42676 & 1008 & \textbf{0.000} & \textbf{42660} & 1136 & \textbf{0.000} & 0.657\\
& 1.0E-6 & 48021 & 1682 & 46242 & 1024 & \textbf{0.000} & \textbf{46233} & 1173 & \textbf{0.000} & 0.647\\
& 1.0E-8 & 51144 & 1903 & 49199 & 1039 & \textbf{0.000} & \textbf{49194} & 1204 & \textbf{0.000} & 0.734\\
& 1.0E-10 & 53543 & 1598 & \textbf{51776} & 1054 & \textbf{0.000} & \textbf{51776} & 1231 & \textbf{0.000} & 0.723\\
& 1.0E-12 & 56451 & 2372 & \textbf{54089} & 1068 & \textbf{0.000} & 54092 & 1257 & \textbf{0.000} & 0.751\\
& 1.0E-14 & 58337 & 1876 & \textbf{56204} & 1081 & \textbf{0.000} & 56210 & 1281 & \textbf{0.000} & 0.796\\
& 1.0E-16 & 60503 & 2056 & \textbf{58120} & 1094 & \textbf{0.000} & 58129 & 1303 & \textbf{0.000} & 0.830\\ \hline

\multirow{10}{*}{ca-GrQc} & 0.2 & 5543667 & 71362 & 5521866 & 76663 & 0.408 & \textbf{5493337} & 67046 & \textbf{0.009} & 0.086\\
& 0.1 & 5577721 & 79310 & 5587814 & 77062 & 0.690 & \textbf{5559126} & 67282 & 0.255 & 0.084\\
& 0.01 & 5735476 & 103924 & 5744434 & 78015 & 0.425 & \textbf{5715369} & 67843 & 0.615 & 0.084\\
& 1.0E-4 & 5963501 & 72294 & 5953199 & 79296 & 0.929 & \textbf{5923633} & 68598 & \textbf{0.074} & 0.079\\
& 1.0E-6 & 6127198 & 79306 & 6108259 & 80253 & 0.399 & \textbf{6078319} & 69174 & \textbf{0.019} & 0.076\\
& 1.0E-8 & 6245158 & 115796 & 6236959 & 81047 & 0.824 & \textbf{6206709} & 69657 & 0.169 & 0.074\\
& 1.0E-10 & 6348120 & 82176 & 6349284 & 81743 & 0.679 & \textbf{6318765} & 70081 & 0.095 & 0.071\\
& 1.0E-12 & 6508955 & 112190 & 6450188 & 82370 & 0.046 & \textbf{6419426} & 70463 & \textbf{0.001} & 0.067\\
& 1.0E-14 & 6559776 & 97731 & 6542561 & 82945 & 0.734 & \textbf{6511579} & 70814 & 0.069 & 0.065\\
& 1.0E-16 & 6670967 & 104511 & 6626325 & 83468 & 0.143 & \textbf{6595141} & 71133 & \textbf{0.003} & 0.067\\ \hline

\multirow{10}{*}{Erdos992} & 0.2 & 13750812 & 117719 & \textbf{13725406} & 83102 & 0.605 & 13728034 & 89372 & 0.574 & 0.976\\
& 0.1 & \textbf{13835782} & 84823 & 13851522 & 83182 & 0.416 & 13854145 & 89534 & 0.383 & 0.965\\
& 0.01 & 14153639 & 107878 & \textbf{14151038} & 83382 & 0.723 & 14153647 & 89928 & 0.712 & 0.953\\
& 1.0E-4 & 14570235 & 101171 & \textbf{14550280} & 83668 & 0.344 & 14552870 & 90474 & 0.375 & 1.000\\
& 1.0E-6 & \textbf{14840217} & 105886 & 14846818 & 83896 & 0.469 & 14849394 & 90895 & 0.451 & 0.988\\
& 1.0E-8 & 15106991 & 108070 & \textbf{15092949} & 84095 & 0.906 & 15095514 & 91254 & 0.965 & 1.000\\
& 1.0E-10 & 15350942 & 112637 & \textbf{15307765} & 84276 & 0.198 & 15310319 & 91574 & 0.243 & 0.988\\
& 1.0E-12 & 15525936 & 77168 & \textbf{15500739} & 84444 & 0.375 & 15503283 & 91868 & 0.329 & 0.976\\
& 1.0E-14 & 15699173 & 125671 & \textbf{15677400} & 84602 & 0.460 & 15679936 & 92141 & 0.515 & 0.988\\
& 1.0E-16 & 15839330 & 111483 & \textbf{15837595} & 84750 & 0.859 & 15840122 & 92392 & 0.929 & 0.976\\ \hline

    \end{tabular}
    \caption{
    Results for stochastic minimum weight dominating set in the \emph{uniform random setting} with different confidence levels of $\alpha$ where $\alpha=1-\beta$. 
    }
    \label{tab:uniform}
\end{table*}

\begin{table*}[h]
\tiny
    \centering
    \begin{tabular}{|c|c||c|c||c|c|c||c|c|c||c|} \hline 
       \multirow{2}{*}{Graph} & \multirow{2}{*}{$\beta$} &  \multicolumn{2}{|c||}{\bfseries (1+1)EA } & \multicolumn{3}{|c||}{\bfseries GSEMO} & \multicolumn{4}{|c|}{\bfseries Convex GSEMO}  \\ 
  & &     Mean & Std & Mean & Std & $p_1$ & Mean & Std & $p_2$ & $p_3$  \\ \hline
\multirow{10}{*}{cfat200-1} & 0.2 & 4539 & 175 & 4510 & 147 & 0.717 & \textbf{4481} & 136 & 0.918 & 0.813\\
& 0.1 & 4875 & 178 & 4851 & 151 & 0.469 & \textbf{4822} & 140 & 0.297 & 0.734\\
& 0.01 & 5667 & 194 & 5662 & 162 & 0.540 & \textbf{5632} & 151 & 0.549 & 0.836\\
& 1.0E-4 & \textbf{6705} & 175 & 6742 & 176 & 0.201 & 6712 & 165 & 0.204 & 0.836\\
& 1.0E-6 & 7534 & 200 & 7544 & 186 & 0.610 & \textbf{7514} & 177 & 0.636 & 0.836\\
& 1.0E-8 & 8253 & 265 & 8210 & 195 & 0.882 & \textbf{8179} & 187 & 0.779 & 0.824\\
& 1.0E-10 & 8830 & 226 & 8791 & 203 & 0.600 & \textbf{8760} & 195 & 0.535 & 0.824\\
& 1.0E-12 & 9325 & 230 & 9313 & 211 & 0.877 & \textbf{9282} & 203 & 0.900 & 0.824\\
& 1.0E-14 & 9813 & 272 & 9791 & 217 & 0.695 & \textbf{9760} & 210 & 0.684 & 0.824\\
& 1.0E-16 & 10264 & 310 & 10224 & 223 & 0.965 & \textbf{10193} & 217 & 0.929 & 0.824\\ \hline

\multirow{10}{*}{cfat200-2} & 0.2 & 3129 & 221 & \textbf{2963} & 6 & \textbf{0.009} & 2993 & 114 & 0.053 & 0.371\\
& 0.1 & 3248 & 160 & \textbf{3185} & 8 & 0.098 & 3216 & 118 & 0.425 & 0.371\\
& 0.01 & 3778 & 173 & \textbf{3712} & 15 & 0.318 & 3745 & 127 & 0.882 & 0.371\\
& 1.0E-4 & 4653 & 278 & \textbf{4415} & 24 & \textbf{0.002} & 4450 & 140 & \textbf{0.011} & 0.371\\
& 1.0E-6 & 5192 & 277 & \textbf{4937} & 31 & \textbf{0.000} & 4974 & 149 & \textbf{0.000} & 0.371\\
& 1.0E-8 & 5787 & 285 & \textbf{5370} & 37 & \textbf{0.000} & 5409 & 157 & \textbf{0.000} & 0.371\\
& 1.0E-10 & 6006 & 317 & \textbf{5748} & 42 & \textbf{0.000} & 5788 & 164 & \textbf{0.002} & 0.371\\
& 1.0E-12 & 6349 & 316 & \textbf{6088} & 46 & \textbf{0.000} & 6129 & 171 & \textbf{0.003} & 0.371\\
& 1.0E-14 & 6759 & 349 & \textbf{6399} & 50 & \textbf{0.000} & 6441 & 177 & \textbf{0.000} & 0.371\\
& 1.0E-16 & 6956 & 335 & \textbf{6680} & 54 & \textbf{0.000} & 6724 & 182 & \textbf{0.000} & 0.371\\ \hline

\multirow{10}{*}{ca-netscience} & 0.2 & 31530 & 1463 & 27611 & 995 & \textbf{0.000} & \textbf{27587} & 952 & \textbf{0.000} & 0.918\\
& 0.1 & 33723 & 1403 & 29133 & 1022 & \textbf{0.000} & \textbf{29110} & 981 & \textbf{0.000} & 0.929\\
& 0.01 & 36963 & 2082 & 32742 & 1085 & \textbf{0.000} & \textbf{32720} & 1049 & \textbf{0.000} & 0.953\\
& 1.0E-4 & 42136 & 1659 & 37549 & 1170 & \textbf{0.000} & \textbf{37527} & 1139 & \textbf{0.000} & 1.000\\
& 1.0E-6 & 44111 & 2145 & 41114 & 1233 & \textbf{0.000} & \textbf{41092} & 1205 & \textbf{0.000} & 0.941\\
& 1.0E-8 & 47195 & 1626 & 44072 & 1286 & \textbf{0.000} & \textbf{44050} & 1261 & \textbf{0.000} & 0.965\\
& 1.0E-10 & 49034 & 1669 & 46653 & 1331 & \textbf{0.000} & \textbf{46632} & 1309 & \textbf{0.000} & 0.953\\
& 1.0E-12 & 51861 & 2184 & 48973 & 1372 & \textbf{0.000} & \textbf{48951} & 1352 & \textbf{0.000} & 0.953\\
& 1.0E-14 & 53979 & 1816 & 51096 & 1410 & \textbf{0.000} & \textbf{51074} & 1391 & \textbf{0.000} & 0.953\\
& 1.0E-16 & 55646 & 1864 & 53021 & 1444 & \textbf{0.000} & \textbf{52999} & 1427 & \textbf{0.000} & 0.953\\ \hline

\multirow{10}{*}{ca-GrQc} & 0.2 & 4197338 & 67941 & 3998433 & 63560 & \textbf{0.000} & \textbf{3977975} & 56787 & \textbf{0.000} & 0.280\\
& 0.1 & 4233493 & 75658 & 4065409 & 64072 & \textbf{0.000} & \textbf{4044756} & 57294 & \textbf{0.000} & 0.280\\
& 0.01 & 4311732 & 78692 & 4224464 & 65290 & \textbf{0.000} & \textbf{4203348} & 58497 & \textbf{0.000} & 0.294\\
& 1.0E-4 & 4485697 & 61376 & 4436475 & 66917 & \textbf{0.011} & \textbf{4414741} & 60103 & \textbf{0.000} & 0.294\\
& 1.0E-6 & 4628802 & 96005 & 4593946 & 68128 & 0.249 & \textbf{4571753} & 61297 & \textbf{0.020} & 0.301\\
& 1.0E-8 & 4737725 & 63258 & 4724649 & 69135 & 0.460 & \textbf{4702076} & 62289 & \textbf{0.043} & 0.301\\
& 1.0E-10 & 4837627 & 58423 & 4838723 & 70016 & 0.953 & \textbf{4815817} & 63155 & 0.114 & 0.301\\
& 1.0E-12 & 4946776 & 63573 & 4941198 & 70808 & 0.701 & \textbf{4917993} & 63935 & 0.128 & 0.301\\
& 1.0E-14 & \textbf{4996542} & 62242 & 5035010 & 71533 & \textbf{0.040} & 5011532 & 64648 & 0.337 & 0.301\\
& 1.0E-16 & \textbf{5087514} & 71264 & 5120078 & 72192 & 0.069 & 5096352 & 65296 & 0.636 & 0.301\\ \hline

\multirow{10}{*}{Erdos992 ND} & 0.2 & 9352047 & 57702 & 9296596 & 52712 & \textbf{0.001} & \textbf{9296402} & 73513 & \textbf{0.001} & 0.595\\
& 0.1 & 9468588 & 57137 & 9422734 & 53069 & \textbf{0.005} & \textbf{9422553} & 74025 & \textbf{0.004} & 0.584\\
& 0.01 & 9743994 & 67921 & 9722300 & 53917 & 0.294 & \textbf{9722150} & 75244 & 0.124 & 0.564\\
& 1.0E-4 & 10133440 & 74481 & 10121606 & 55047 & 0.813 & \textbf{10121496} & 76869 & 0.383 & 0.564\\
& 1.0E-6 & 10440708 & 77221 & 10418190 & 55886 & 0.329 & 10418111 & 78076 & 0.174 & 0.564\\
& 1.0E-8 & 10657346 & 50014 & 10664361 & 56583 & 0.425 & \textbf{10664307} & 79078 & 0.882 & 0.564\\
& 1.0E-10 & \textbf{10866919} & 56847 & 10879211 & 57191 & 0.391 & 10879179 & 79953 & 0.894 & 0.574\\
& 1.0E-12 & 11090265 & 66762 & 11072215 & 57738 & 0.391 & \textbf{11072202} & 80739 & 0.139 & 0.574\\
& 1.0E-14 & 11264516 & 61580 & \textbf{11248904} & 58239 & 0.301 & 11248909 & 81458 & 0.179 & 0.574\\
& 1.0E-16 & 11414410 & 66047 & \textbf{11409124} & 58693 & 0.906 & 11409146 & 82111 & 0.525 & 0.574\\ \hline

    \end{tabular}
    \caption{
    Results for stochastic minimum weight dominating set in the \emph{degree-based setting} with different confidence levels of $\alpha$ where $\alpha=1-\beta$.
    }
    \label{tab:degree}
\end{table*}

\begin{table*}[h]
\tiny
    \centering
    \begin{tabular}{|c|c||c|c||c|c|c||c|c|c||c|} \hline 
       \multirow{2}{*}{Graph} & \multirow{2}{*}{$\beta$} &  \multicolumn{2}{|c||}{\bfseries (1+1)EA } & \multicolumn{3}{|c||}{\bfseries GSEMO} & \multicolumn{4}{|c|}{\bfseries Convex GSEMO}  \\ 
  & &     Mean & Std & Mean & Std & $p_1$ & Mean & Std & $p_2$ & $p_3$  \\ \hline
\multirow{10}{*}{cfat200-1} &  0.2 & 170311 & 11016 & 227664 & 21698 & \textbf{0.000} & \textbf{169954} & 11955 & 0.762 & \textbf{0.000}\\
& 0.1 & 233716 & 11997 & 284043 & 19488 & \textbf{0.000} & \textbf{231067} & 10945 & 0.501 & \textbf{0.000}\\
& 0.01 & 378240 & 11565 & 417272 & 14737 & \textbf{0.000} & \textbf{375031} & 8780 & 0.363 & \textbf{0.000}\\
& 1.0E-4 & 572211 & 10556 & 592698 & 9229 & \textbf{0.000} & \textbf{564727} & 6283 & 0.004 & \textbf{0.000}\\
& 1.0E-6 & 715342 & 12480 & 719542 & 5799 & 0.220 & \textbf{687779} & 15319 & \textbf{0.000} & \textbf{0.000}\\
& 1.0E-8 & 817353 & 42170 & 786609 & 15154 & \textbf{0.000} & \textbf{729576} & 22084 & \textbf{0.000} & \textbf{0.000}\\
& 1.0E-10 & 798071 & 56201 & 838022 & 20127 & \textbf{0.000} & \textbf{764112} & 25902 & \textbf{0.008} & \textbf{0.000}\\
& 1.0E-12 & 820664 & 30778 & 883991 & 24506 & \textbf{0.000} & \textbf{794903} & 29220 & \textbf{0.004} & \textbf{0.000}\\
& 1.0E-14 & 832737 & 35171 & 926026 & 28527 & \textbf{0.000} & \textbf{822829} & 32145 & 0.301 & \textbf{0.000}\\
& 1.0E-16 & 867904 & 44324 & 963998 & 32057 & \textbf{0.000} & \textbf{847889} & 34612 & 0.080 & \textbf{0.000} \\ \hline

\multirow{10}{*}{cfat200-2} & 0.2 & \textbf{95320} & 4654 & 105428 & 7240 & \textbf{0.000} & 96016 & 4224 & 0.478 & \textbf{0.000}\\
& 0.1 & 140361 & 5769 & 146017 & 6314 & \textbf{0.001} & \textbf{138917} & 4992 & 0.284 & \textbf{0.000}\\
& 0.01 & 244930 & 9460 & 242200 & 4308 & 0.359 & \textbf{240388} & 7884 & 0.135 & \textbf{0.032}\\
& 1.0E-4 & 374151 & 15537 & 335380 & 9379 & \textbf{0.000} & \textbf{326490} & 17856 & \textbf{0.000} & \textbf{0.023}\\
& 1.0E-6 & 356025 & 21961 & 369648 & 13944 & \textbf{0.006} & \textbf{351823} & 19688 & 0.544 & \textbf{0.001}\\
& 1.0E-8 & 380502 & 22350 & 397615 & 17575 & \textbf{0.002} & \textbf{372325} & 21313 & 0.114 & \textbf{0.000}\\
& 1.0E-10 & 392917 & 22408 & 421882 & 20802 & \textbf{0.000} & \textbf{389702} & 22932 & 0.690 & \textbf{0.000}\\
& 1.0E-12 & 415384 & 22260 & 443681 & 23805 & \textbf{0.000} & \textbf{405138} & 24481 & 0.089 & \textbf{0.000}\\
& 1.0E-14 & 430166 & 31161 & 463638 & 26611 & \textbf{0.000} & \textbf{419137} & 26120 & 0.198 & \textbf{0.000}\\
& 1.0E-16 & 441968 & 28484 & 481734 & 29189 & \textbf{0.000} & \textbf{431716} & 27779 & 0.165 & \textbf{0.000}\\ \hline

\multirow{10}{*}{ca-netscience} & 0.2 & \textbf{3843100} & 321904 & 4111813 & 253126 & \textbf{0.001} & 3852578 & 267143 & 0.894 & \textbf{0.001}\\
& 0.1 & \textbf{4219712} & 302664 & 4468346 & 241796 & \textbf{0.002} & 4227101 & 256295 & 0.929 & \textbf{0.001}\\
& 0.01 & 5255877 & 311693 & 5314300 & 215783 & 0.433 & \textbf{5109816} & 232377 & 0.069 & \textbf{0.002}\\
& 1.0E-4 & 6489076 & 306911 & 6439660 & 183709 & 0.734 & \textbf{6269480} & 196614 & \textbf{0.005} & \textbf{0.003}\\
& 1.0E-6 & 7367139 & 294759 & 7271660 & 161498 & 0.139 & \textbf{7123384} & 171641 & \textbf{0.000} & \textbf{0.004}\\
& 1.0E-8 & 8149664 & 318271 & 7958983 & 141356 & \textbf{0.016} & \textbf{7830604} & 152165 & \textbf{0.000} & \textbf{0.005}\\
& 1.0E-10 & 8796292 & 275920 & 8556092 & 121928 & \textbf{0.000} & \textbf{8447513} & 135846 & \textbf{0.000} & \textbf{0.006}\\
& 1.0E-12 & 9396379 & 268369 & 9091751 & 104603 & \textbf{0.000} & \textbf{9001694} & 122148 & \textbf{0.000} & \textbf{0.008}\\
& 1.0E-14 & 9976483 & 292924 & 9581852 & 89388 & \textbf{0.000} & \textbf{9509029} & 110705 & \textbf{0.000} & \textbf{0.010}\\
& 1.0E-16 & 10514393 & 301209 & 10025468 & 74748 & \textbf{0.000} & \textbf{9969076} & 101545 & \textbf{0.000} & \textbf{0.014}\\ \hline

\multirow{10}{*}{ca-GrQc} & 0.2 & \textbf{6945134726} & 153247522 & 7993444910 & 181767271 & \textbf{0.000} & 7885729518 & 175981512 & \textbf{0.000} & \textbf{0.036}\\
& 0.1 & \textbf{7098408979} & 177345894 & 8155414980 & 181110239 & \textbf{0.000} & 8049761823 & 175050598 & \textbf{0.000} & \textbf{0.036}\\
& 0.01 & \textbf{7505293118} & 146259716 & 8540068565 & 179656004 & \textbf{0.000} & 8439324294 & 173046258 & \textbf{0.000} & \textbf{0.049}\\
& 1.0E-4 & \textbf{8123837239} & 156945064 & 9052790645 & 177970022 & \textbf{0.000} & 8958594338 & 170840565 & \textbf{0.000} & \textbf{0.049}\\
& 1.0E-6 & \textbf{8517842407} & 222244050 & 9433595465 & 176918246 & \textbf{0.000} & 9344141399 & 169750950 & \textbf{0.000} & 0.060\\
& 1.0E-8 & \textbf{8865439822} & 135694855 & 9749472103 & 176303455 & \textbf{0.000} & 9664094225 & 169158872 & \textbf{0.000} & 0.079\\
& 1.0E-10 & \textbf{9162900901} & 144772187 & 10025110451 & 175845930 & \textbf{0.000} & 9943339250 & 168816725 & \textbf{0.000} & 0.117\\
& 1.0E-12 & \textbf{9463156242} & 203558050 & 10272721608 & 175503787 & \textbf{0.000} & 10193818611 & 168690636 & \textbf{0.000} & 0.132\\
& 1.0E-14 & \textbf{9718762214} & 138997011 & 10499399221 & 175251354 & \textbf{0.000} & 10423023168 & 168660254 & \textbf{0.000} & 0.156\\
& 1.0E-16 & \textbf{9951938979} & 157329646 & 10704941556 & 175078679 & \textbf{0.000} & 10630780035 & 168707908 & \textbf{0.000} & 0.165\\ \hline

\multirow{10}{*}{Erdos992} & 0.2 & \textbf{28124991273} & 422087412 & 28150596195 & 385281517 & 0.515 & 28142708326 & 421439222 & 0.859 & 0.723\\
& 0.1 & 28590748263 & 417135109 & 28593909556 & 382655226 & 0.756 & \textbf{28586466445} & 418896625 & 0.859 & 0.712\\
& 0.01 & 29651325305 & 413433664 & 29646740263 & 376452287 & 0.871 & \textbf{29640353415} & 412882553 & 0.779 & 0.734\\
& 1.0E-4 & 31048070243 & 418000297 & 31050114495 & 368258069 & 0.941 & \textbf{31045139488} & 404921574 & 0.779 & 0.745\\
& 1.0E-6 & 32126892633 & 408691442 & 32092451304 & 362207224 & 0.965 & \textbf{32088547584} & 399051641 & 0.734 & 0.723\\
& 1.0E-8 & 32985593114 & 400844161 & 32957609437 & 357226726 & 0.953 & \textbf{32954594905} & 394208717 & 0.767 & 0.723\\
& 1.0E-10 & 33735518181 & 381758701 & 33712692952 & 352912242 & 0.953 & \textbf{33710454476} & 390004455 & 0.668 & 0.756\\
& 1.0E-12 & 34430190991 & 384215525 & 34390998799 & 349063131 & 0.779 & \textbf{34389457468} & 386246155 & 0.564 & 0.745\\
& 1.0E-14 & 35096156052 & 362370289 & 35011948096 & 345569370 & 0.487 & \textbf{35011063337} & 382821331 & 0.308 & 0.734\\
& 1.0E-16 & 35667814756 & 401846524 & 35575007063 & 342426114 & 0.280 & \textbf{35574730689} & 379729121 & 0.220 & 0.723\\ \hline

    \end{tabular}
    \caption{
    Results for stochastic minimum weight dominating set in the \emph{negatively correlated setting} with different confidence levels of $\alpha$ where $\alpha=1-\beta$. 
    }
    \label{tab:ncor}
\end{table*}

\end{document}